\documentclass[11pt]{article}

\usepackage{geometry}
\geometry{verbose,tmargin=1in,bmargin=1in,lmargin=1in,rmargin=1in}

\usepackage{fullpage}
\usepackage{epsf}
\usepackage{fancyheadings}
\usepackage{graphics}
\usepackage{graphicx}
\usepackage{psfrag}

\usepackage{parskip}
\usepackage{algorithm}
\usepackage{algorithmic}

\usepackage{mathtools}
\usepackage{tikz-cd}
\usepackage{dcmacro}
\usepackage{comment}
\usepackage{xcite}
\usepackage{xr-hyper}

\usepackage{color}
\usepackage{amsthm}
\usepackage{amsfonts}
\usepackage{amsmath}
\usepackage{amssymb}
\usepackage{microtype}
\usepackage{subfigure}
\usepackage{booktabs}

\usepackage{natbib}
\usepackage{float}
\usepackage{hyperref}
\hypersetup{
    colorlinks,%
    citecolor=blue,%
    filecolor=blue,%
    linkcolor=blue,%
    urlcolor=blue
}


\theoremstyle{plain}









\usepackage{commath}

\newcommand{\rstd}{R^{\mathrm{std} }}
\newcommand{\termone}{T_1}
\newcommand{\termtwo}{T_2}
\newcommand{\termthree}{T_3}
\newcommand{\termfour}{T_4}
\newcommand{\RobustRisk}{R_{\mu, \Sigma}^{B, \varepsilon}}
\newcommand{\RobustOptRisk}{R_{\mu, \Sigma}^{B, \varepsilon}*}

\newcommand{\CleanOptRisk}{\rstd_{\mu', \Sigma}*}
\newcommand{\Zpop}{z_\Sigma(\mu)}
\newcommand{\Zemp}{\widehat{z}}
\newcommand{\AdvSNR}{\mathrm{AdvSNR}}
\newcommand{\StdSNR}{\mathrm{StdSNR}}
\newcommand{\defn}{:=}

\newcommand{\termfive}{U_1}
\newcommand{\termsix}{U_2}
\newcommand{\termseven}{U_3}


\begin{document}

\begin{center}

{\bf{\LARGE{Sharp Statistical Guarantees for Adversarially Robust Gaussian Classification}}}

\vspace*{.2in}

{\large{
\begin{tabular}{ccc}
Chen Dan$^\star$ & Yuting Wei$^\dagger$ & Pradeep Ravikumar$^\ddagger$\\
\end{tabular}
}}

\vspace*{.2in}

\begin{tabular}{c}
Computer Science Department$^\star$\\
Department of Statistics and Data Science$^\dagger$\\
Machine Learning Department$^\ddagger$\\
Carnegie Mellon University

\end{tabular}

\vspace*{.2in}

\today

\vspace*{.2in}
\end{center}

\begin{abstract}
	\vspace{0.1cm}
	Adversarial robustness has become a fundamental requirement in modern machine learning applications. Yet, there has been surprisingly little statistical understanding so far. In this paper, we provide the first result of the \emph{optimal} minimax guarantees for the excess risk for adversarially robust classification, under Gaussian mixture model proposed by \cite{schmidt2018adversarially}. The results are stated in terms of the \emph{Adversarial Signal-to-Noise Ratio (AdvSNR)}, which
	generalizes a similar notion for standard linear classification to the adversarial setting. For the Gaussian mixtures with AdvSNR value of $r$, we establish an excess risk lower bound of order $\Theta(e^{-(\frac{1}{8}+o(1)) r^2} \frac{d}{n})$ and design a computationally efficient estimator that achieves this optimal rate. Our results built upon minimal set of assumptions while cover a wide spectrum of adversarial perturbations including $\ell_p$ balls for any $p \ge 1$.
\end{abstract}

\section{Introduction}

Recent years, machine learning algorithms have revolutionized our life due to their tremendous success in a variety of different domains such as image classification, natural language processing and strategy games (e.g. \citet{krizhevsky2012imagenet,bahdanau2014neural,silver2016mastering}). These algorithms often achieve extremely accurate performances yet are susceptible to small perturbations of the inputs. In particular, \citet{szegedy2013intriguing} (among others e.g. \citet{goodfellow2014explaining,papernot2016limitations}) noticed that small perturbations (nearly imperceptible) to images could cause neural network classifiers to make wrong predictions with high confidence. 
While a growing amount of effort has been made in order to empirically improve the robustness of these learning algorithms against adversarial attacks,
the problems of assessing statistical optimality, understanding generalization and statistical significance are important but far less understood. In this paper, we take a step towards this end.

In this work, we consider the adversarially robust classification problem under the Gaussian mixture model proposed by \citet{schmidt2018adversarially}.  
While the classification for mixture of Gaussian distributions --- which is also referred to as discriminant analysis --- has now been standard in statistics and computer science literature (see, e.g.~\citet{mclachlan2004finite}), it is only until recently that researchers start to consider what can go wrong in the adversarial scenarios for this simple problem.  
It turns out (and as is shown in the sequel) that this simple yet instructive model demonstrates clear tradeoffs between adversarially robustness and the statistical complexities, and at the same time, capturing some of the features one would encounter in real applications.

Under minimal assumptions of the adversarial perturbations, we provide optimal minimax lower bounds, and show that a natural computationally efficient estimator achieves these minimax lower bounds in terms of the adversarial signal to noise ratio. Putting these together gives a sharp characterization of the intrinsic hardness of this problem in terms of how far one can push towards a robust estimator without any essential loss of statistical accuracy. 
These optimal lower and upper bounds are useful since that they provide a comprehensive view of the adversarially robust sample complexity of the conditional Gaussian model, which could then be contrasted with that of the rates of the classical conditional Gaussian model.

Despite of an extensive line of work considering this problem,  \citet{schmidt2018adversarially} and \citet{NIPS2019_8968} lie most closely to this paper. 
In order to obtain tight statistical characterizations of the risk, they made a number of simplifications, which thus do not directly provide answers to the minimax sample complexity of the original problem. As one main contrast, they consider the Bayesian setting where the means of the conditional Gaussians have as prior an independent standard Gaussian distribution. 
For other simplifications, \citet{schmidt2018adversarially} considered the spherical models so that the covariance is identity and also made additional simplifications such as large separation between two Gaussians and an upper bound on the noise level. These additional assumptions made it hard to compare with that of the adversary-free scenario. More detailed comparisons and discussions are provided after our main results.

\subsection{Our contributions} 
The main contributions of this paper are summarized below, all of which are built upon a careful analysis of the classification error for linear classifiers. 
\begin{itemize}
	\item We develop the first minimax lower bounds for the classification excess risk in the conditional Gaussian model, stated in Theorem~\ref{thm:main_lower_bound}. 
	In terms of the Adversarial Signal-to-Noise Ratio (AdvSNR), this excess risk scales as $\Omega_P(\exp(-(\frac{1}{8} + o(1))r^2) \frac{d}{n})$ for \mbox{$\AdvSNR = r$}, dimension $d$ and sample size $n.$
	
	\item We construct a computationally efficient estimator based on the solution of a constrained quadratic optimization problem that has excess risk of order 
	$O_P(\exp(-(\frac{1}{8} + o(1))r^2) \frac{d}{n})$. This result is given 
	in Theorem~\ref{thm:main_upper_bound}.
	Hence, the upper bound is nearly tight (up to lower order terms in $r$) with the minimax lower bound in our regime of interest in terms of AdvSNR $r$, dimension $d$ and sample size $n$ . 
	
	\item The recipe provided herein, works for a wide range of adversarial perturbations, generalizing the result by \citet{schmidt2018adversarially} who focus only on the $\ell_\infty$-type perturbations. 
	
	\item Finally, our results are built upon minimum set of assumptions, without assuming 
	strong separations between two classes, allowing for unknown and arbitrary covariance structure and the rates are naturally adaptive to the true signal. 
\end{itemize}

Our findings unveil new insights into the adversarially robust sample complexity of the conditional Gaussian model which goes beyond of what the current theory has to offer.  

\subsection{Other related works}
The conditional Gaussian models or mixture of Gaussians has been studied a lot in statistics and computer science literature. 
An incomplete and more recent list includes \citet{kim2006robust,azizyan2013minimax,li2015fast,li2017minimax,tony2019high}. 
In the context of adversarial robustness, since the seminal work of \cite{schmidt2018adversarially}, there are several other papers that studied the sample complexity issue in conditional Gaussian models. \citet{NIPS2019_8968} also provided a slightly improved bound in the same setting.  \citet{carmon2019unlabeled}, \citet{stanforth2019labels}, \citet{zhai2019adversarially} showed that with the help of unlabeled data, it is possible to achieve high robust accuracy with the same number of labeled data required for standard learning.  

Another line of research study the sample complexity of adversarially robust learning under the PAC framework, using extensions of Rademacher complexity or VC dimension, including \citet{attias2018improved}, \citet{khim2018adversarial}, \citet{yin2018rademacher}, \citet{cullina2018pac}, \citet{montasser2019vc}, \citet{awasthi2020adversarial}.
The tradeoff in standard and robust accuracy has been theoretically and empirically studied in \citet{zhang2019theoretically}, \citet{suggala2018revisiting},  \citet{tsipras2018robustness}, \citet{raghunathan2020understanding} and \citet{javanmard2020precise}.

Several previous works analyzed the robustness of specific family of classifiers. The early work of \citet{xu2009robust, xu2009robustness} estabilished the connections between robust optimization for linear models and certain types of regularization in classification and regression settings. Subsequently, \citet{xu2012robustness} also showed that under certain notion of robustness, robust algorithms can generalize well. \citet{wang2017analyzing} studied the robustness of nearest neighbor classifiers. 

From the aspect of computational complexity, some recent works  showed that learning a robust model or even verifying robustness of a given model can be computationally hard, including \citep{bubeck2018a, bubeck2018b} and  \citep{awasthi2019robustness, weng2018towards}.

\subsection{Notations}

For the reader's convenience, we list here our notational conventions. 

For positive semi-definite matrix $A$, we use $\|x\|_A \defn \sqrt{x^T A x}$.
Let $\Phi(\cdot)$ the CDF of standard Gaussian distribution $\mathcal{N}(0, 1)$ and 
$\bar{\Phi}(x) \defn 1 - \Phi(x)$. 
The notation $f(n,d) = O\left(g(n, d)\right)$ means that there exits a universal constant $c>0$ that does not depend on the problem parameters such as $n,d$ etc, 
such that $|f(n,d)|\leq c|g(n,d)|$. Similarly, we define $f(n,d) = \Omega\left(g(n,d)\right)$ when there exist constants $c_{1},c_{2}>0$ such that $c_{1}|g(n,d)|\leq|f(n,d)|\leq c_{2}|g(n,d)|$. 
Notation $O_P, \Omega_P$ are used if the corresponding relations happen with probability converges to 1 as 
$n \to \infty$ (see e.g. Chapter 2 of \citep{van2000asymptotic}).
We define the $\ell_{p}$ norm $\|x\|_p = (\sum_{i=1}^d x^p_{i})^{1/p}$ and the corresponding $\ell_{p}$-ball as $\{x \in \R^d |\|x\|_p\leq 1\}$.

\section{Preliminaries}

This section is devoted to setting up the adversarial robust classification problem that is considered in this paper. Along the way, we introduce necessary background and state several preliminary results for future comparisons. 

\paragraph{Conditional Gaussian Model} 
We consider the binary classification problem with data pair $(x,y)$ generated from the mixture of two Gaussian distributions $P_{\mu,\Sigma}$,
\begin{align*}
p(y=1) &= \half, \quad p(y=-1)= \half,\\
p(x|y) &= \mathcal{N}(x;y \mu, \Sigma).
\end{align*}
Here $\mu \in \R^d$,  $\Sigma \in \R^{d\times d}, \Sigma \succeq 0$ denote the mean and covariance of the Gaussian distribution.
Given $n$ training samples $(x_i, y_i) \sim_{i.i.d.} P_{\mu, \Sigma}$ for $1 \leq i \leq n$, the goal is to learn a classifier $\hat{f}(x)$ for predicting the class of a future data point that is drawn from the same distribution $P_{\mu, \Sigma}$.

\paragraph{Adversarially Robust Classification} 

In the standard setting of classification, the optimal classifier is defined as the one that which minimizes the population classification error 
\begin{equation*}\label{def:std_err}
\rstd_{\mu, \Sigma}(f) :=  \E_{(x,y) \sim P_{\mu, \Sigma}} \left[ \mathbb{I}(f(x) \neq y) \right].
\end{equation*}
which we refer to the standard error throughout. 
In this paper, we consider the classification problem under conditional Gaussian generative model in presence of an adversary --- which is to say --- at the testing stage, an adversary is allowed to add any perturbation $\delta$ to the input $x$, that has bounded magnitude $\|\delta\|_B \leq \varepsilon$.
The norm defined here is the standard Minkowski functional that associated with a convex set \cite{thompson1996minkowski}. Formally, given a closed and origin-symmetric convex set $B$, 
the Minkowski functional is defined as 
\begin{align*}
\|x\|_B := \inf\{\lambda \in \mathbb {R}_{>0}: x\in \lambda B\}.
\end{align*}
For instance, when $B$ is the $\ell_p$ unit ball, then $\|x\|_B$ boils down to the classical $\ell_p$ norm of $x$.
In practice, the most widely considered norm for the adversary are $\ell_\infty$ and $\ell_2$ norms.

In the adversarially robust setting, a mapping $f: \R^d \rightarrow \{-1, +1\}$ classifies a sample $(x,y)$ correctly, if and only if the prediction agrees with the true label for \emph{all} possible perturbations of the adversary. To put it in mathematical form, 
\begin{equation*}
\ell_{B, \varepsilon}(f; x, y) := \mathbb{I}\left( \exists \delta: \|\delta\|_B \leq \varepsilon, ~f(x+\delta) \neq y\right).
\end{equation*}

Our goal is to obtain a classifier with minimal expected robust classification error,  i.e. 
finding mapping $f$ that minimizes  
\begin{align}
\label{def:robust_error}
\notag	\RobustRisk(f) &= \E_{(x,y) \sim P_{\mu, \Sigma}} [\ell_{B, \varepsilon}(f; x, y)]\\
&= \E_{(x,y) \sim P_{\mu, \Sigma}} [ \mathbb{I}\left( \exists \|\delta\|_B 
\leq \varepsilon, f(x+\delta) \neq y\right)].
\end{align}
The optimal risk is then defined as the classification error regarding the optimal classifier, namely 
\begin{align}
\RobustOptRisk := \RobustRisk(f_*),
\end{align}
and accordingly, we define the excess risk of any classifier $f$ as
\begin{align}
\RobustRisk(f) - \RobustOptRisk,
\end{align}
which by definition is always non-negative.

\paragraph{Robust Bayes Optimal Classifier} 
To motivate the robust optimal classifiers, we start our discussion with the optimal risk and optimal classifier in the conditional Gaussian Model. 
We note that when $\varepsilon=0$, i.e. there is no adversary, the classification problem reduces to the well-known \emph{Fisher's Linear Discriminant Analysis} problem, where the Bayes optimal classifier is a simple linear classifier
\begin{align*}
f_{\text{Bayes} }(x) = \sgn(\mu^Tx),
\end{align*}
known as Fisher's linear discriminant rule (see, e.g.~\citet{johnson2002applied}). 
The Bayes optimal classifier minimizes the misclassification rate. However, the classifier that minimizes the \emph{robust} classification error is not known until recently, where \cite{NIPS2019_8968} provided a tight lower bound on the minimal robust classification error via optimal transport techniques. 
It is also proved that the optimal risk can be written as the optimal value of a convex program, and the \emph{oracle} optimal classifier is a linear classifier that has a closed form given the solution of the convex program. 

We find it is useful to first simplify and restate this result in order to set the stage for our main result. 
\begin{theorem}[Restated and simplified from \citet{NIPS2019_8968}]
	\label{thm:robust_bayes_optimality}
	Let $\Zpop$ be the solution of the following convex program:
	\begin{equation}\label{eqn:z0}
	\Zpop = \argmin_{\|z\|_B \leq \varepsilon} \|\mu - z\|_{\Sigma^{-1}}^2 ,
	\end{equation}
	where $\|x\|_A = \sqrt{x^T A x}$. \footnote{Note that this notation is different with \cite{NIPS2019_8968}, where in their notation $\|x\|_A = \sqrt{x^T A^{-1} x}$.}Then, the optimal robust classifier for $P_{\mu, \Sigma}$ is a linear classifier $f_*(x) = \sgn(w_0^T x)$, where 
	\begin{equation}\label{eqn:def_w_0}
	w_0 := \Sigma^{-1} (\mu - z_{\Sigma}(\mu)),
	\end{equation}
	and the optimal robust classification error is
	\begin{equation*}
	\RobustOptRisk := \bar{\Phi}(\|w_0\|_\Sigma) = \bar{\Phi}(\|\mu - \Zpop\|_{\Sigma^{-1}}).
	\end{equation*}	
\end{theorem}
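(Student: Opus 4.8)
\emph{Proof sketch.} The plan is to split the claim into two pieces: the structural fact --- established in \citet{NIPS2019_8968} via optimal transport --- that \emph{some} linear classifier attains the optimal robust risk for $P_{\mu,\Sigma}$, which I take as given; and a direct computation that pins down the best linear classifier and its robust error. Throughout I assume $\Sigma \succ 0$ and that $B$ is compact with nonempty interior (so $\|\cdot\|_B$ is a genuine norm), the general case being recovered by restricting to the support of the Gaussians.

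\emph{Step 1: robust error of a linear classifier.} Fix $w \neq 0$ and let $f_w(x) = \sgn(\langle w, x\rangle)$. Write $h_B(w) := \sup_{\|\delta\|_B \le 1} \langle w, \delta\rangle$ for the support function of $B$ (a norm, since $B=-B$ is convex and compact). For a sample with $y = +1$, an admissible perturbation flips the prediction iff $\langle w, x+\delta\rangle < 0$ for some $\|\delta\|_B \le \varepsilon$, i.e. iff $\langle w,x\rangle - \varepsilon\, h_B(w) < 0$; the case $y=-1$ is symmetric. Hence, up to the $P_{\mu,\Sigma}$-null event $\{\langle w, yx\rangle = \varepsilon\, h_B(w)\}$,
\begin{equation*}
\RobustRisk(f_w) = \Prob\big(\langle w, yx\rangle < \varepsilon\, h_B(w)\big).
\end{equation*}
Since $x\mid y \sim \mathcal{N}(y\mu,\Sigma)$ and $y^2=1$, the variable $\langle w, yx\rangle$ is $\mathcal{N}(\langle w,\mu\rangle,\, \|w\|_\Sigma^2)$ irrespective of $y$, so
\begin{equation*}
\RobustRisk(f_w) = \bar{\Phi}\!\left( \frac{\langle w,\mu\rangle - \varepsilon\, h_B(w)}{\|w\|_\Sigma} \right).
\end{equation*}

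\emph{Step 2: optimizing over $w$.} Because $\bar{\Phi}$ is strictly decreasing and the argument above is invariant under positive scaling of $w$, minimizing $\RobustRisk(f_w)$ is equivalent to computing $\sup_{w\neq 0} g(w)$ with $g(w) := \|w\|_\Sigma^{-1}\big(\langle w,\mu\rangle - \varepsilon\, h_B(w)\big)$. Using $\varepsilon\, h_B(w) = \sup_{\|z\|_B \le \varepsilon}\langle w,z\rangle$ we have $g(w) = \|w\|_\Sigma^{-1}\inf_{\|z\|_B\le\varepsilon}\langle w, \mu-z\rangle$, and weak duality together with Cauchy--Schwarz in the $\Sigma$-inner product gives
\begin{equation*}
\sup_{w\neq 0} g(w) \ \le\ \inf_{\|z\|_B\le\varepsilon}\ \sup_{w\neq 0}\ \frac{\langle w,\mu-z\rangle}{\|w\|_\Sigma} \ =\ \inf_{\|z\|_B\le\varepsilon}\|\mu-z\|_{\Sigma^{-1}} \ =\ \|\mu - z_\Sigma(\mu)\|_{\Sigma^{-1}},
\end{equation*}
the last step being the definition \eqref{eqn:z0}. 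To see this is attained, take $w_0 = \Sigma^{-1}(\mu - z_\Sigma(\mu))$ as in \eqref{eqn:def_w_0}: the first-order optimality condition for \eqref{eqn:z0} reads $\langle \Sigma^{-1}(\mu-z_\Sigma(\mu)),\, z - z_\Sigma(\mu)\rangle \le 0$ for all $\|z\|_B\le\varepsilon$, equivalently $\langle w_0, z\rangle \le \langle w_0, z_\Sigma(\mu)\rangle$ for all such $z$. Therefore $\inf_{\|z\|_B\le\varepsilon}\langle w_0, \mu-z\rangle = \langle w_0, \mu - z_\Sigma(\mu)\rangle = \|\mu - z_\Sigma(\mu)\|_{\Sigma^{-1}}^2 = \|w_0\|_\Sigma^2$, whence $g(w_0) = \|w_0\|_\Sigma = \|\mu - z_\Sigma(\mu)\|_{\Sigma^{-1}}$. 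So $w_0$ achieves the supremum, the best linear classifier is $f_*(x) = \sgn(\langle w_0,x\rangle)$, and its robust error is $\bar{\Phi}(\|w_0\|_\Sigma) = \bar{\Phi}(\|\mu - z_\Sigma(\mu)\|_{\Sigma^{-1}})$ (in the degenerate case $\mu\in\varepsilon B$ this reads $w_0=0$ and $\RobustRisk(f_*)=\bar{\Phi}(0)=\half$, the trivial rate).

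\emph{Conclusion and main obstacle.} Combining Step 2 with the cited structural fact that the robust Bayes optimal classifier for $P_{\mu,\Sigma}$ may be taken linear promotes ``optimal among linear classifiers'' to ``optimal overall'', yielding both the form of $f_*$ and the value $\RobustRisk(f_*) = \bar{\Phi}(\|w_0\|_\Sigma)$. The genuinely hard part is precisely this structural input: a priori $\RobustRisk(f)$ is a functional of the entire decision region, and rewriting it in terms of $\varepsilon$-dilations and erosions of that region, one needs a rearrangement/isoperimetric statement --- proved in \citet{NIPS2019_8968} through optimal-transport couplings --- to conclude that halfspaces are optimal. I would invoke this as a black box; the content of the present restatement is Steps 1--2, which turn their abstract convex program into the transparent form \eqref{eqn:z0} and the closed-form error $\bar{\Phi}(\cdot)$.
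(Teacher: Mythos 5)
Your Steps 1--2 are correct and essentially reproduce the first half of the paper's argument: Step 1 is Lemma~\ref{lem:error_linear} (which the paper cites from \citet{NIPS2019_8968} rather than rederiving), and Step 2 delivers the same identity as Corollary~\ref{corollary:w_and_z}, which the paper obtains via Von Neumann strong duality for the bilinear minimax problem $\min_{\|z\|_B\le\varepsilon}\max_{\|w\|_\Sigma\le 1} w^T(\mu-z)$; your weak-duality-plus-explicit-attainment route to $w_0^T\mu - \varepsilon\|w_0\|_{B*} = \|w_0\|_\Sigma^2$ is equally valid and arguably cleaner. The genuine divergence is in how optimality over \emph{all} classifiers is established. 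You defer this to a rearrangement/isoperimetric statement from \citet{NIPS2019_8968} invoked as a black box, and you flag it as the hard part. The paper avoids it entirely with a short elementary argument (Lemma~\ref{lem:lb_robust_error}): for any classifier $f$ and any $\mu'$ with $\|\mu'-\mu\|_B\le\varepsilon$, the adversary may always play the \emph{constant} perturbation $\delta=\mu'-\mu$ on the positive class and $\delta=\mu-\mu'$ on the negative class, which immediately gives $\RobustRisk(f)\ge \rstd_{\mu',\Sigma}(f)$; taking $\mu'=\mu-\Zpop$ and invoking the classical Gaussian Bayes risk $\bar{\Phi}(\|\mu'\|_{\Sigma^{-1}})$ lower-bounds every classifier's robust risk by exactly the value your $w_0$ attains. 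This makes the proof self-contained, sidesteps any structural claim that halfspaces are optimal, and is moreover the same lemma the paper reuses for the minimax lower bound (Lemma~\ref{thm:minimax_reduction}), so it is worth internalizing rather than outsourcing. Your proposal is not wrong, but the step you identify as ``genuinely hard'' is in fact the easy step once one notices the constant-shift trick.
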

\noindent We remark that the above mentioned classifier is indeed an oracle classifier since it is constructed using the unknown parameters $\mu$ and $\Sigma.$

\paragraph{Adversarial Signal-To-Noise Ratio (AdvSNR).} 
In the context of standard classification in the conditional Gaussian model, the notion of Signal-To-Noise Ratio was introduced to measure the effective separation which is defined as the Mahalanobis distance between the means of two conditional distributions.

\begin{definition}[Standard Signal-To-Noise Ratio] The Standard Signal-To-Noise Ratio (StdSNR) of  conditional Gaussian model $P_{\mu,\Sigma}$ is defined as
	\begin{equation*}
	\StdSNR(\mu, \Sigma) \defn 2 \|\mu\|_{\Sigma^{-1}}.
	\end{equation*}	
\end{definition}

Here, the constant $2$ is introduced to be consistent with the literature in Fisher's LDA, e.g. \cite{tony2019high}, where SNR is defined as the Mahalanobis distance between means of two mixture components. We make the note that the StdSNR measures the difficulty of standard classification in the conditional Gaussian model, since the minimal misclassification error equals to $\bar{\Phi}(\half \StdSNR(\mu, \Sigma))$ \cite{tony2019high}. 
In fact, the misclassification error decreases exponentially as the StdSNR increases.  

When it comes to the adversarial setting, StdSNR, however, is no longer a proper metric for the classification difficulty. 
Specifically, conditional Gaussian models with the same StdSNR can have very different levels of hardness in the adversarially robust classification problem. 
In order to illustrate this, we demonstrate a simple example. 

\begin{example}
	Consider an adversary which is allowed to perturb the input with budget 
	$\varepsilon = \frac{6}{\sqrt{d}}$ in terms the $\ell_\infty$ norm. 
	Set the covariance $\Sigma$ to be the identity matrix $I_d$. 
	We examine two conditional Gaussian models, $P_{\mu_1, \Sigma}$ and  $P_{\mu_2, \Sigma}$ with different means $\mu_1$ and $\mu_2$, where
	\begin{align*}
	\mu_1 = \frac{6}{\sqrt{d}} \cdot (1,1,1, \cdots, 1)^T, 
	\quad
	\mu_2 = (6,0,0, \cdots, 0)^T.
	\end{align*}{}
\end{example}{}

It is easily seen that $\|\mu_1\|_{\Sigma^{-1}} = \|\mu_2\|_{\Sigma^{-1}} = 6$, therefore $P_{\mu_1, \Sigma}$ and $P_{\mu_2, \Sigma}$ have the same StdSNR. However, by Theorem \ref{thm:robust_bayes_optimality}, these two distributions actually exhibit completely different minimal robust classification error, indeed, 
\begin{align*}
R_{\mu_1, \Sigma}^{B, \varepsilon} = \bar{\Phi}(0) = \half, 
\quad
R_{\mu_2, \Sigma}^{B, \varepsilon} = \bar{\Phi}(6 - \frac{6}{\sqrt{d}}).
\end{align*}
When the dimension $d$ is sufficiently large, the optimal risk $R_{\mu_2, \Sigma}^{B, \varepsilon}$ approaches $\bar{\Phi}(6) \approx 10^{-8}$, which means there exists a very good robust classifier for $P_{\mu_2, \Sigma}$. 
In contrast, the optimal risk $R_{\mu_1, \Sigma}^{B, \varepsilon} = \half$, i.e. no classifier can achieve a robust accuracy better than a uninformative predictor that classifies everything as the same class. 
From this simple example, it is safe to conclude that StdSNR is not an ideal measurement for the difficulty in the adversarially robust classification problem.

To address the above issue, one need a proper definition of the signal-to-noise-ratio that is suitable for the adversarial robust setting. 
Therefore we introduce the Adversarial Signal-To-Noise Ratio (AdvSNR) for 
any $(B, \varepsilon)$ adversary.

\begin{definition}[Adversarial Signal-To-Noise Ratio] Define the $(B, \varepsilon)$ Adversarial Signal-To-Noise Ratio (AdvSNR) of conditional Gaussian model $P_{\mu,\Sigma}$ as 
	\begin{equation*}
	\AdvSNR_{B, \varepsilon}(\mu, \Sigma) := 2\|\mu - \Zpop\|_{\Sigma^{-1}} = 2 \|w_0\|_\Sigma,
	\end{equation*}	
	where $w_0$ is defined in \eqref{eqn:def_w_0}.
\end{definition}

As a consequence of Theorem \ref{thm:robust_bayes_optimality}, the minimal robust classification error 
satisfies 
\begin{align}
\label{EqnOptRisk}
\RobustOptRisk = \bar{\Phi}\left(\half \AdvSNR(\mu, \Sigma)\right).
\end{align}
Consequently, the AdvSNR fully characterizes the difficulty for the adversarially robust setting as the StdSNR in the standard setting. We also note that when $\varepsilon=0$, i.e. there is no adversary, the AdvSNR reduces to the traditional definition of the StdSNR. Thus, AdvSNR is a reasonable generalization for StdSNR.

Naturally, for every $r>0$, one can consider a class of distributions where each of them has the same $(B, \varepsilon)$-AdvSNR equal to $r$. Within each class, they should enjoy the same hardness of the classification problem. Formally, let us define the class $D_{B, \varepsilon}(r)$.
\begin{definition}\label{def:family_advsnr}
	The family of conditional Gaussian models with $(B, \varepsilon)$-AdvSNR value of $r$, is defined as:
	\begin{equation*}
	D_{B, \varepsilon}(r) \defn \{(\mu, \Sigma)| \AdvSNR_{B, \varepsilon}(\mu, \Sigma) = r\}.
	\end{equation*}
\end{definition}
In the sequel, we develop our minimax lower bounds over these classes of distributions.  
To assist our analysis, we also define the family of conditional Gaussian models with a standard SNR value of $r$ similarly. 
\begin{definition}
	The family of conditional Gaussian models with a standard SNR value of $r$, is defined as:
	\begin{equation*}
	D_{\mathrm{std}}(r) \defn \{(\mu, \Sigma)| \StdSNR(\mu, \Sigma) = r\}.
	\end{equation*}
\end{definition}
In the derivations of our upper bounds and minimax lower bounds, we make the assumption that the AdvSNR $r$ is strictly bounded away from zero by a universal constant \footnote{for instance, $r\geq 10^{-9}$}, otherwise as a result of Theorem \ref{thm:robust_bayes_optimality}, no classifier can achieve accuracy much better than $\half$, the robust risk of a constant classifier $f(x) \equiv 1$.

\section{A Coputationally Efficient Estimator and Risk Upper Bound}

Thus far, we introduce the notion of $\AdvSNR$ which is known to characterize the minimal robust classification error as in expression~\eqref{EqnOptRisk}.
However, whether there exists a computation-efficient classifier that behaves similarly to the oracle best classifier is still unclear.

This section, we aim to answer this question in the affirmative by constructing such a classifier. 
For the classifier that we shall define in the sequel, we give an exact characterization of its excess 
robust classification error compared with the oracle best classifier. 
Motivated by the fact that the optimal robust classifier has the form of \eqref{eqn:def_w_0}, we design a "plug-in" estimator for $w_0$. The estimator is described in the following algorithm.

\begin{algorithm}[H]
	\caption{A plug-in estimator of $w_0$}
	\label{alg:estimator}
	\begin{algorithmic}
		\STATE {\bfseries Input:} Data pairs $\{(x_i, y_i)\}_{i=1}^n$.
		\STATE {\bfseries Output:} $\widehat{w}$.
		\STATE {\bfseries Step 1:} Define $\widehat{\mu}$ and $\widehat{\Sigma}$ as
		\vspace*{-0.2cm}
		\begin{align*}
		\widehat{\mu} &\defn \frac{1}{n} \sum_{i=1}^n y_i x_i, \qquad 
		\widehat{\Sigma} \defn \frac{1}{n} \sum_{i=1}^n x_i x_i ^T - \widehat{\mu} \widehat{\mu}^T.
		\end{align*}
		 \vspace*{-0.2cm}
		\STATE {\bfseries Step 2:} Solve for $\Zemp$ in the following  
		\begin{equation*}
		\Zemp \defn z_{\widehat{\Sigma}}(\widehat{\mu}) 
		= \argmin_{\|z\|_B \leq \varepsilon} \|\widehat{\mu} - z\|_{\widehat{\Sigma}^{-1}}^2.
		\end{equation*}
		 \vspace*{-0.2cm}
		\STATE {\bfseries Step 3:} Define $\widehat{w} \defn \widehat{\Sigma}^{-1} (\widehat{\mu} -\Zemp)$.
	\end{algorithmic}
\end{algorithm}

The main theorem of this section is to characterize the excess risk bound of the classifier
induced by $\widehat{w}$.

\begin{theorem}\label{thm:main_upper_bound}
	For the $(\|\cdot\|_B, \varepsilon)$ adversary, suppose the adversarial signal-to-noise ratio \\ $\AdvSNR_{B, \varepsilon}(\mu, \Sigma) = r$, then the excess risk of $f_{\widehat{w}}$ is upper bounded by
	\begin{equation*}
	\RobustRisk(f_{\widehat{w}}) - \RobustOptRisk \leq O_P\Big(e^{-\frac{1}{8} r^2} \cdot r \cdot \frac{d}{n}\Big).
	\end{equation*}
\end{theorem}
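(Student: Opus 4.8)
The plan is to reduce the excess risk to a scalar ``robust margin gap'' and then control that gap by combining an exact algebraic identity with standard Gaussian concentration.

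\emph{Step 1: robust risk of a linear classifier, and reduction to a margin gap.} I would first record the robust risk of an arbitrary linear classifier: letting the adversary take its best response (whose value is the support function $h_B(w):=\sup_{\|\delta\|_B\le 1}w^T\delta$ of the unit $B$-ball), a short computation shows
\[
\RobustRisk(f_w)=\bar\Phi\big(g(w)\big),\qquad g(w):=\frac{w^T\mu-\varepsilon\,h_B(w)}{\|w\|_\Sigma}.
\]
The functional $g$ is invariant under positive rescaling of $w$, and by Theorem~\ref{thm:robust_bayes_optimality} the rule $f_{w_0}$ is optimal among \emph{all} classifiers, so $g(w_0)=\|w_0\|_\Sigma=r/2=\max_{w\neq 0}g(w)$. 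Hence $t:=g(w_0)-g(\widehat w)\ge 0$ deterministically, and on the probability-$\to1$ event $\{t\le r/2\}$ (in fact $t=o_P(1)$) the monotonicity of $\bar\Phi$ and a one-line bound on the resulting Gaussian-tail increment give
\[
\RobustRisk(f_{\widehat w})-\RobustOptRisk=\int_{r/2-t}^{r/2}\!\Phi'(s)\,ds\;\le\;t\,\Phi'\!\big(\tfrac r2-t\big)\;\le\;\frac{t}{\sqrt{2\pi}}\,e^{-r^2/8}\,e^{rt/2}.
\]
It therefore suffices to prove $t=O_P(r\,d/n)$; then $rt=O_P(r^2d/n)$, which is absorbed into the lower-order factor in the paper's regime of interest, so $e^{rt/2}=1+o_P(1)$ and the claimed bound follows.

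\emph{Step 2: an exact two-term decomposition of the margin gap.} Using $h_B(w)\ge\tfrac1\varepsilon\,w^T\Zpop$ (feasibility of $\Zpop/\varepsilon$), the identity $\mu-\Zpop=\Sigma w_0$, and Cauchy--Schwarz in the inner product $\langle u,v\rangle_\Sigma:=u^T\Sigma v$, I would establish, for every $w\neq0$,
\[
g(w_0)-g(w)=\underbrace{\Big(\|w_0\|_\Sigma-\tfrac{w^T\Sigma w_0}{\|w\|_\Sigma}\Big)}_{A(w)\,\ge\,0}\;+\;\underbrace{\tfrac{\varepsilon\,h_B(w)-w^T\Zpop}{\|w\|_\Sigma}}_{B(w)\,\ge\,0},
\]
and observe that, with the $\Sigma$-normalization $\bar v:=v/\|v\|_\Sigma$, the first term is genuinely quadratic in the \emph{direction} of $w$: $A(w)=\|w_0\|_\Sigma\big(1-\bar w^T\Sigma\bar w_0\big)=\tfrac r4\|\bar w-\bar w_0\|_\Sigma^2$. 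For the second term, the KKT conditions for the convex programs defining $\Zpop$ in \eqref{eqn:z0} and $\Zemp$ say that $w_0$ and $\widehat w$ lie in the normal cone of $\{\|z\|_B\le\varepsilon\}$ at $\Zpop$ and $\Zemp$ respectively; in particular $\varepsilon\,h_B(\widehat w)=\widehat w^T\Zemp$ and $w_0^T(\Zemp-\Zpop)\le0$, which yields
\[
B(\widehat w)=\frac{\widehat w^T(\Zemp-\Zpop)}{\|\widehat w\|_\Sigma}\le\frac{(\widehat w-w_0)^T(\Zemp-\Zpop)}{\|\widehat w\|_\Sigma}\le\frac{\|\widehat w-w_0\|_\Sigma\,\|\Zemp-\Zpop\|_{\Sigma^{-1}}}{\|\widehat w\|_\Sigma}.
\]

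\emph{Step 3: concentration, and the main obstacle.} Since the $y_ix_i$ are i.i.d.\ $\mathcal N(\mu,\Sigma)$, $\widehat\mu\sim\mathcal N(\mu,\Sigma/n)$ gives $\|\widehat\mu-\mu\|_{\Sigma^{-1}}=O_P(\sqrt{d/n})$; and $\widehat\Sigma$ is the sample second-moment matrix of the i.i.d.\ $\mathcal N(0,\Sigma)$ vectors $\zeta_i:=x_i-y_i\mu$ minus an $O_P(d/n)$ rank-one term, so (for $d<n$) $\opnorm{\Sigma^{-1/2}(\widehat\Sigma-\Sigma)\Sigma^{-1/2}}=O_P(\sqrt{d/n})$, hence also $\opnorm{\Sigma^{1/2}(\widehat\Sigma^{-1}-\Sigma^{-1})\Sigma^{1/2}}=O_P(\sqrt{d/n})$ and $\|\widehat w\|_\Sigma=\tfrac r2(1+o_P(1))$. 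Because $z\mapsto\|\mu-z\|_{\Sigma^{-1}}^2$ is $2$-strongly convex in $\|\cdot\|_{\Sigma^{-1}}$, comparing its first-order optimality at $\Zpop$ with that of the perturbed objective $\|\widehat\mu-z\|_{\widehat\Sigma^{-1}}^2$ at $\Zemp$ bounds $\|\Zemp-\Zpop\|_{\Sigma^{-1}}$ by the perturbation of the gradient at $\Zpop$, which is $O_P(\sqrt{d/n})(1+\|\mu-\Zpop\|_{\Sigma^{-1}})=O_P(r\sqrt{d/n})$; then decomposing $\widehat w-w_0=\widehat\Sigma^{-1}(\widehat\mu-\Zemp)-\Sigma^{-1}(\mu-\Zpop)$ and bounding termwise gives $\|\widehat w-w_0\|_\Sigma=O_P(r\sqrt{d/n})$. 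The decisive point is that normalization cancels this $r$ (the factor $\|w_0\|_\Sigma=r/2$ sits in the denominator), so $\|\bar{\widehat w}-\bar w_0\|_\Sigma=O_P(\sqrt{d/n})$ with an $r$-free constant; plugging into Step 2, $A(\widehat w)=\tfrac r4\,O_P(d/n)=O_P(r\,d/n)$ and $B(\widehat w)\le\frac{O_P(r\sqrt{d/n})\cdot O_P(r\sqrt{d/n})}{\tfrac r2(1+o_P(1))}=O_P(r\,d/n)$, whence $t=O_P(r\,d/n)$ and Step 1 concludes. I expect the main difficulty to be exactly this last step --- carrying the perturbation analysis of the composed map $(\mu,\Sigma)\mapsto\Sigma^{-1}(\mu-z_\Sigma(\mu))$ through precisely enough that the \emph{normalized} estimated direction converges at rate $\sqrt{d/n}$ with a constant that does not grow with $r$ (this is what produces the single power of $r$, rather than a higher power, in the statement), while the non-smoothness of $h_B$ (equivalently, of the constraint set) is handled via the normal-cone/KKT formulation of Step 2 rather than by differentiation.
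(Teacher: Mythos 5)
Your proposal is correct and reaches the stated rate, but its core is genuinely different from the paper's argument. The paper's key step is the exact identity of Lemma~\ref{lem:delta_n}: writing $\|\widehat{w}\|_\Sigma\,\delta_n=\termone+\termtwo+\termthree+\termfour$, it arranges matters so that the three terms involving the hard-to-control quantities $\|\widehat{w}\|_\Sigma-\|w_0\|_\Sigma$ and $\Zemp-\Zpop$ all carry a \emph{negative} sign (using only the first-order optimality of $\Zpop$ for $\termtwo$, and the fact that $\widehat{w}$ is the exact robust-optimal direction for $P_{\widehat{\mu},\widehat{\Sigma}}$), so the only term needing an upper bound is $\termfour$, which depends on $\widehat{\mu}-\mu$ and $\widehat{\Sigma}-\Sigma$ alone and is dispatched by Lemma~\ref{lem:mean_convergence}; no quantitative stability of the argmin map $z_\Sigma(\mu)$ or of $\widehat{w}$ is ever needed. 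You instead split the same margin gap into two \emph{nonnegative} pieces $A+B$, and must therefore bound each from above, which forces you to prove $\|\Zemp-\Zpop\|_{\Sigma^{-1}}=O_P(r\sqrt{d/n})$ and $\|\widehat{w}-w_0\|_\Sigma=O_P(r\sqrt{d/n})$ via the variational-inequality/strong-convexity perturbation argument. Your accounting checks out: the normalization divides out one power of $r$ so that $A(\widehat{w})=\tfrac{r}{4}O_P(d/n)$, and the product of two $O_P(r\sqrt{d/n})$ errors over $\|\widehat{w}\|_\Sigma\asymp r$ gives $B(\widehat{w})=O_P(r\,d/n)$; the KKT ingredients you invoke ($\varepsilon h_B(\widehat{w})=\widehat{w}^T\Zemp$ and $w_0^T(\Zemp-\Zpop)\le 0$) are exactly the ones the paper uses. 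The trade-off: the paper's route is shorter and sidesteps perturbation analysis of the constrained minimizer entirely, while yours is standard M-estimation machinery that makes more transparent where the single factor of $r$ (rather than $r^2$) comes from. One small point in your favor: your bound $\int_{r/2-t}^{r/2}\Phi'(s)\,ds\le t\,\Phi'(r/2-t)$ tracks the exponential prefactor in the remainder more carefully than the paper's Taylor expansion, whose $O(\delta_n^2)$ term is written without the $e^{-r^2/8}$ factor; both arguments still implicitly require $rt=o_P(1)$ for the stated rate to be the dominant term.
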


We take a moment to make several remarks. 
First recall that the $\AdvSNR$ is defined as a measurement for the hardness of the classification problem. Indeed, as the above result shows, the excess risk vanishes exponentially with the $\AdvSNR$. 
Moreover, our estimator is \emph{adaptive} in the sense that it does not require knowing any information about the value of $r$, but the theoretical guarantee improves automatically with larger AdvSNRs. We also note that the dependency with sample size $n$ is $O\left(\frac{1}{n}\right)$, which is the same as the rate of Fisher's LDA, but faster than the typical $O\left(\frac{1}{\sqrt{n}}\right)$ rate.

\paragraph{Comparisons to \cite{schmidt2018adversarially}}

We note that our result generalizes the one showed in \cite{schmidt2018adversarially} in many different aspects: 
\begin{enumerate}
	\item In terms of the perturbations,  \citet{schmidt2018adversarially} considered  
	perturbations in $\ell_\infty$ balls, while ours allow for any convex, closed and origin-symmentric perturbaion set $B$, including all $\ell_p$ balls for $p\geq 1.$
	
	\item Our upper and lower bounds hold for both spherical and non-spherical Gaussians, without 
	the knowledge of the population covariance structure.
	
	\item We impose no restrictions on the separation between Gaussian distributions. \citet{schmidt2018adversarially} studied a very specific regime, where the budget of $\ell_\infty$ adversary is bounded by  $\frac{1}{4}$ , the separation between the means of two Gaussians is $\sqrt{d}$, and the spherical covariance matrix $\Sigma = \sigma^2 I$ satisfies $\sigma \leq \frac{1}{32}d^{1/4}$. This regime is low-noise by design, while our analysis applies to any regime whenever there exists a classifier with robust accuracy slightly better than $\half$ .
	
	\item  Our estimator is consistent, i.e. the excess risk converges to zero as sample size $n \rightarrow \infty$.  The classifier used in \citet{schmidt2018adversarially} is actually $\sgn(\hat{\mu}^T x)$. While this classifier achieve near-optimal classification error in the regime of their interst (the low noise regime mentioned above with Gaussian prior on $\mu$), the excess risk does not converge to zero in general. This is due to the fact that the large-sample limit of their classifier is actually $\sgn(\mu^T x)$, i.e. the Bayes optimal classifier for the standard setting. As we can see from Theorem \ref{thm:robust_bayes_optimality} and a simple simulation in Figure \ref{fig:exp}, the excess risk of their algorithm  saturates at a level above zero, which is very different from the behavior of Algorithm \ref{alg:estimator}.
\end{enumerate}

\begin{figure}
	\centering
	\includegraphics[width=\linewidth]{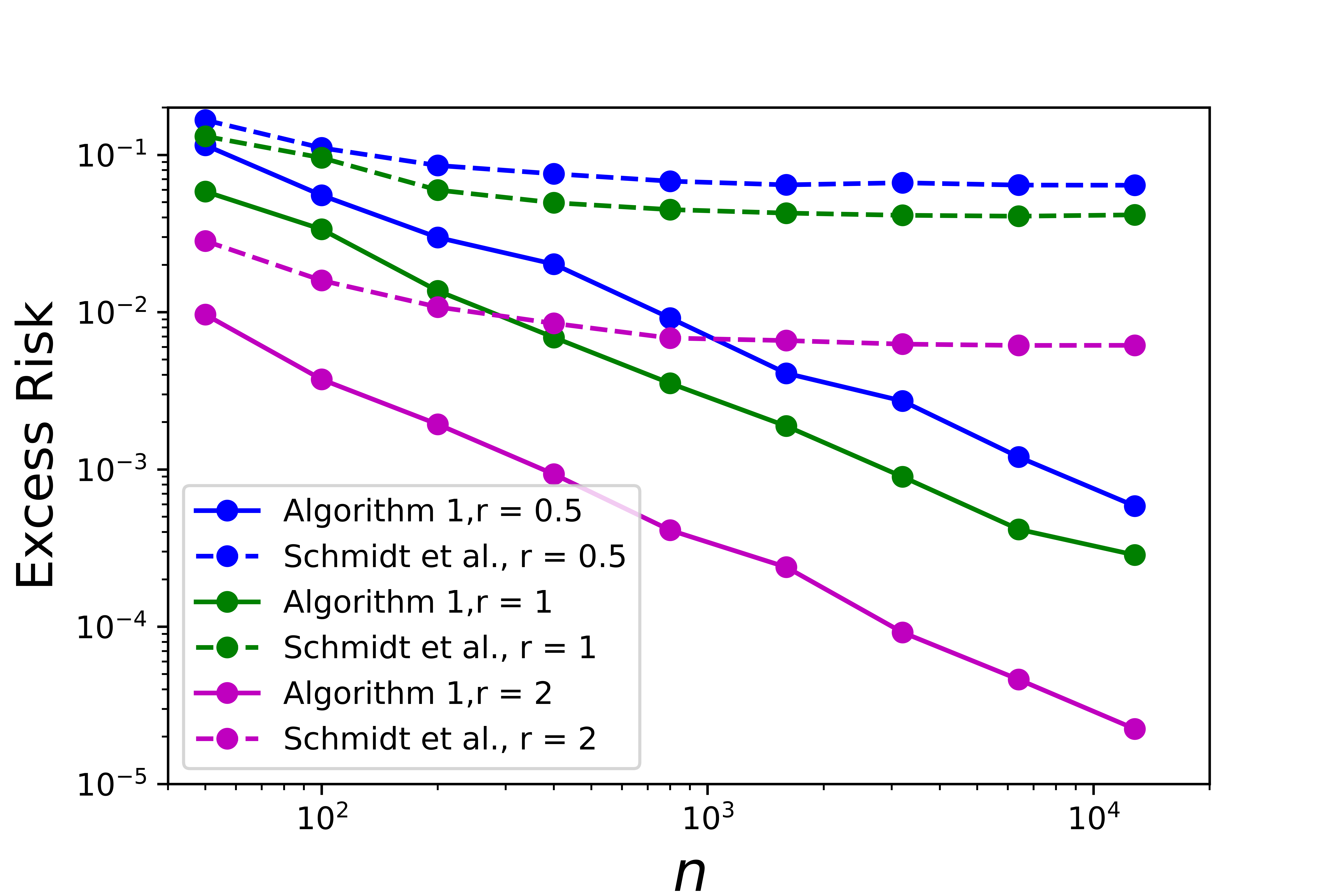}
	\caption{
		A simple simulation on the performance of Algorithm \ref{alg:estimator}  and the algorithm proposed in \citep{schmidt2018adversarially} is shown here with different values of AdvSNR $r$. Here we consider a $50$-dimensional example under $\ell_\infty$ adversary with $\varepsilon=0.1$. The covariance matrix is fixed to be $\Sigma= I$, and the mean parameter $\mu$ is set as $\mu =  (r+\varepsilon, \varepsilon, \varepsilon, \cdots, \varepsilon)  $  for $r \in \{0.5, 1.0, 2.0\}$. We evaluate the excess risk $\RobustRisk(f_{\widehat{w}}) - \RobustOptRisk$ returned by the two algorithms using $n$ i.i.d. training data pairs, where $n \in 
		\{50, 100, 200, 400, 800, 1600, 3200, 6400, 12800\}$. For each combination of $(n, r)$, the averaged excess risk over $10$ random repetitions is reported respectively.}                  
	\label{fig:exp}
\end{figure}
\paragraph{Proof Sketch:} Here we provide a brief sketch of the proof. More details can be found in the Section~\ref{Sec:details}.

\paragraph{Step 1: First order approximation of the risk.} Since both the learned $f_{\hat{w}}$ and the optimal robust classifier $f_*$ are linear classifiers, we can calculate the robust excess risk in closed form using Lemma \ref{lem:error_linear} (also shown in \cite{NIPS2019_8968}):
\begin{equation*}
\RobustRisk(f_{\widehat{w}}) - \RobustOptRisk = \bar{\Phi} \left(\frac{\widehat{w}^T \mu -  \varepsilon \|\widehat{w}\|_{B*}}{\|\widehat{w}\|_\Sigma} \right) - \bar{\Phi}(\half r).
\end{equation*}
By the Taylor expansion of $\bar{\Phi}(\cdot)$, we have
\begin{equation*}
\bar{\Phi} \left(\frac{\widehat{w}^T \mu -  \varepsilon \|\widehat{w}\|_{B*}}{\|\widehat{w}\|_\Sigma} \right) - \bar{\Phi}(\half r) \approx \frac{1}{\sqrt{2\pi}}e^{-\frac{1}{8} r^2} \delta_n, 
\end{equation*}
where 
\begin{equation*}
\delta_n = \half r - \frac{\widehat{w}^T \mu -  \varepsilon \|\widehat{w}\|_{B*}}{\|\widehat{w}\|_\Sigma}.
\end{equation*}
Therefore, it is sufficient to show that $\delta_n = O_P(r \cdot \frac{d}{n})$.

\paragraph{Step 2: Controlling $\delta_n$.} To give an upper bound of $\delta_n$, we will use the fact that  sample mean $\widehat{\mu}$ and sample covariace $\widehat{\Sigma}$ converge to $\mu$ and $\Sigma$ respectively. Furthermore,  the convergence rate is well known as $O_P(\sqrt{\frac{d}{n}})$. 

From a high level, the upper bound of $\delta_n$ is estabished (see Lemma \ref{lem:delta_n}) by carefully decomposing $\delta_n$ into four terms and each 
term is in the form of the differences between population and sample quantities like $\Sigma$ vs $\widehat{\Sigma}$, $\widehat{\mu}$ vs $\mu$. Invoking the convergence rates of $\widehat{\mu}$ and $\widehat{\Sigma}$, we are able to bound each of these terms and complete the proof.

\section{Minimax Lower Bounds}

This section is dedicated to developing minimax excess risk lower bounds for the adversarially robust classification with conditional Gaussian models. 

As is mentioned above, we consider a class of distributions $D_{B, \varepsilon}(r)$ which have the same $\AdvSNR_{B, \varepsilon} = r$, as in Definition~\ref{def:family_advsnr}.
As quantity $\AdvSNR$ characterizes the minimal robust classification error, this class of distributions $D_{B, \varepsilon}(r)$ all share the same adversarially robust classification error. 
Therefore, our lower bounds here measure the fundamental information-theoretic limit of this problem,
namely, no estimator can achieve an essential improvement in terms of the adversarial classification error. 

\begin{theorem}\label{thm:main_lower_bound}
	Let $\widehat{f}$ be any estimator based on $n$ samples $(x_1,y_1), \cdots, (x_n, y_n) \sim_{i.i.d.} P_{\mu, \Sigma}$. We have the following lower bound on the minimax excess risk:	
	\begin{equation*}
	\min_{\widehat{f}} \max_{(\mu, \Sigma) \in D_{B, \varepsilon}(r)} [\RobustRisk(\widehat{f}) - \RobustOptRisk ]  \geq  \Omega_P \Big(e^{-(\frac{1}{8} + o(1))r^2} \frac{d}{n} \Big).
	\end{equation*}	
\end{theorem}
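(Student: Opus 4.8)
The plan is to prove the bound by Assouad's hypercube argument over a $\Theta(d)$-dimensional sub-family of $D_{B,\varepsilon}(r)$, with the cube radius $\rho$ chosen so that Hamming-adjacent vertices are $O(1/n)$-indistinguishable while their robust Bayes classifiers are $\Theta(\rho)$-separated in direction, and to expose the $e^{-(\frac18+o(1))r^2}$ factor from the Taylor expansion of $\bar{\Phi}$ around $\tfrac r2$, exactly as in Step~1 of the proof of Theorem~\ref{thm:main_upper_bound}. Fix $\Sigma=I_d$ (one hard regime suffices, since diagonalizing $\Sigma$ only replaces $B$ by $\Sigma^{-1/2}B$). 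Write $R_{\mu}(f)$ for the robust risk of $f$ under the binary Gaussian mixture $P_{\mu,I}$ and $R^{*}_{\mu}:=\inf_f R_{\mu}(f)$; by Theorem~\ref{thm:robust_bayes_optimality}, $R^{*}_{\mu}=\bar{\Phi}\bigl(\tfrac12\AdvSNR_{B,\varepsilon}(\mu,I)\bigr)$. Since $f\mapsto R_{\mu}(f)$ is linear in the data law, for any $\mu,\mu'\in D_{B,\varepsilon}(r)$ and any classifier $f$,
\begin{align*}
[R_{\mu}(f)-R^{*}_{\mu}]+[R_{\mu'}(f)-R^{*}_{\mu'}] \;=\; 2\bigl(R_{Q}(f)-\bar{\Phi}(\tfrac r2)\bigr) \;\ge\; 2\bigl(R^{*}_{Q}-\bar{\Phi}(\tfrac r2)\bigr),
\end{align*}
where $Q:=\tfrac12 P_{\mu,I}+\tfrac12 P_{\mu',I}$ is a four-component Gaussian mixture and $R^{*}_Q:=\inf_f R_Q(f)$. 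So it suffices to (a)~build the sub-family, (b)~bound $\kull{P^{\otimes n}_{\mu_\tau,I}}{P^{\otimes n}_{\mu_{\tau'},I}}$ for adjacent vertices, and (c)~lower bound $R^{*}_{Q}-\bar{\Phi}(\tfrac r2)$ for adjacent pairs.

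For (a), I would use the KKT characterization behind \eqref{eqn:z0}--\eqref{eqn:def_w_0}: with $\Sigma=I$, any $z^{*}\in\partial(\varepsilon B)$ and any $w$ in the normal cone of $\varepsilon B$ at $z^{*}$ yield $\mu:=z^{*}+w$ with $z_I(\mu)=z^{*}$, $w_0(\mu)=w$, hence $\AdvSNR_{B,\varepsilon}(\mu,I)=2\|w\|_2$, so $\mu\in D_{B,\varepsilon}(r)$ exactly once $\|w\|_2=r/2$. Fix $\rho\asymp(n\cdot\mathrm{poly}(r))^{-1/2}$ and take vertices $w_\tau=\bar w+\rho\sum_{j=1}^{d-1}\tau_j u_j+O(\rho^2)$, renormalized so $\|w_\tau\|_2=r/2$ exactly, with $\bar w$ a base direction at a point of differentiability of $\|\cdot\|_{B*}$ and $\{u_j\}$ an orthonormal system orthogonal to $\bar w$: for polytopal $B$ (e.g.\ $\ell_1$, $\ell_\infty$) keep $z^{*}$ at a vertex whose normal cone is full-dimensional and move $w_\tau$ inside it; for strictly convex $B$ (e.g.\ $\ell_p$, $1<p<\infty$) instead let $z^{*}=z^{*}_\tau$ range over a $(d-1)$-dimensional patch of $\partial(\varepsilon B)$ so that its outward normal sweeps the required directions (a general origin-symmetric convex $B$ is handled by combining the two along the polytopal and strictly-convex pieces of $\partial B$). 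Then $\mu_\tau:=z^{*}_\tau+w_\tau$ is a smooth $\Theta(\rho)$-perturbation of a base mean, every $\mu_\tau\in D_{B,\varepsilon}(r)$, and the robust Bayes directions $w_0(\mu_\tau)=w_\tau$ are pairwise distinct with $\|w_\tau-w_{\tau'}\|_2=\Theta(\rho)$ for adjacent $\tau,\tau'$. For (b), joint convexity of KL applied to the two-component mixtures gives $\kull{P^{\otimes n}_{\mu_\tau,I}}{P^{\otimes n}_{\mu_{\tau'},I}}\le n\,\kull{\mathcal{N}(\mu_\tau,I)}{\mathcal{N}(\mu_{\tau'},I)}=\tfrac n2\|\mu_\tau-\mu_{\tau'}\|_2^2=O(n\rho^2\,\mathrm{poly}(r))\le\tfrac14$ by the choice of $\rho$.

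For (c), reduce any classifier $f$ to its pair of $\varepsilon$-eroded acceptance regions $A_{+}:=\{x:f(x')=1\ \forall\,\|x'-x\|_B\le\varepsilon\}$ and $A_{-}$ (symmetrically), which are automatically $2\varepsilon$-separated in $\|\cdot\|_B$. A short computation gives $R_Q(f)=1-\tfrac12\bigl(\tilde P[A_{+}]+\tilde P'[A_{-}]\bigr)$ with $\tilde P:=\tfrac12\mathcal{N}(\mu_\tau,I)+\tfrac12\mathcal{N}(\mu_{\tau'},I)$ and $\tilde P'$ its reflection through the origin, so $R^{*}_Q=1-\tfrac12\max\{\tilde P[A_{+}]+\tilde P'[A_{-}]:A_{\pm}\ 2\varepsilon\text{-separated}\}$. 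I would prove a quantitative strengthening of the halfspace-optimality of \cite{NIPS2019_8968} adapted to the (slightly non-Gaussian) class-conditional $\tilde P$: the maximum is at most $2\Phi(\tfrac r2)-\Omega\bigl(e^{-\frac18 r^2}c(r)\rho^2\bigr)$ with $c(r)$ at most polynomial in $r$. The gap already appears over halfspaces $H^{\pm}_w=\{x:\pm(w^{\top}x-\varepsilon\|w\|_{B*})\ge0\}$, where $\tilde P[H^{+}_w]+\tilde P'[H^{-}_w]=\Phi(g_{\mu_\tau}(w))+\Phi(g_{\mu_{\tau'}}(w))$ with $g_\mu(w)=\frac{w^{\top}\mu-\varepsilon\|w\|_{B*}}{\|w\|_2}\le\tfrac r2$, equality iff $w\parallel w_0(\mu)$: since $w_0(\mu_\tau)$ and $w_0(\mu_{\tau'})$ differ by angle $\Theta(\rho/r)$, no $w$ makes both $g$'s equal $\tfrac r2$, and the quadratic growth of $g_\mu$ at its maximizer together with $\Phi(\tfrac r2)-\Phi(g)=\bar{\Phi}(g)-\bar{\Phi}(\tfrac r2)\ge\tfrac{1}{\sqrt{2\pi}}e^{-\frac18 r^2}(\tfrac r2-g)$ for $0\le g\le\tfrac r2$ (the same expansion as in Theorem~\ref{thm:main_upper_bound}) gives the stated gap; the strengthening then upgrades it to all $2\varepsilon$-separated pairs. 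Hence $R^{*}_Q-\bar{\Phi}(\tfrac r2)\ge\Omega\bigl(e^{-\frac18 r^2}c(r)\rho^2\bigr)=\Omega\bigl(e^{-(\frac18+o(1))r^2}\rho^2\bigr)$, absorbing $c(r)$ (and benign dependence on $\varepsilon$) into the $o(1)$. Plugging this per-pair separation and the KL bound from (b) into Assouad's lemma over the $(d-1)$-dimensional hypercube (as in the minimax lower bound for Fisher's LDA; cf.\ \citet{tony2019high}) yields $\min_{\widehat f}\max_{(\mu,\Sigma)\in D_{B,\varepsilon}(r)}[\RobustRisk(\widehat f)-\RobustOptRisk]\gtrsim(d-1)\,e^{-(\frac18+o(1))r^2}\rho^2\asymp e^{-(\frac18+o(1))r^2}\tfrac dn$, which is the claim.

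I expect the main obstacle to be step (c): one needs a \emph{quantitative} robust Bayes-optimality statement, with the $e^{-r^2/8}$ scaling explicit and valid for \emph{arbitrary} (nonlinear) classifiers and for the averaged mixture $Q$, whereas Theorem~\ref{thm:robust_bayes_optimality} and \cite{NIPS2019_8968} give only exact optimality for a single Gaussian pair; obtaining it likely requires re-running their optimal-transport argument with the two nearby components of $\tilde P$ carried jointly, or a Gaussian isoperimetric-type stability estimate compatible with the $\varepsilon$-erosion. A secondary, more mechanical difficulty is making the construction in (a) simultaneously genuinely $(d-1)$-dimensional and exactly AdvSNR-$r$ for \emph{every} closed, origin-symmetric convex $B$, which is what forces the polytopal/strictly-convex case split and the $O(\rho^2)$ renormalization of $\|w_\tau\|_2$.
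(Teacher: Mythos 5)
Your overall architecture (Assouad over a $(d-1)$-dimensional sub-family of $D_{B,\varepsilon}(r)$, with the $e^{-r^2/8}$ factor extracted from the expansion of $\bar{\Phi}$ near $r/2$) is a legitimate strategy in principle, and parts of it are sound: the KL bound in (b) is correct, and the construction in (a) of means with prescribed $w_0$ via the normal-cone/first-order condition is essentially the same device as the paper's Lemma~\ref{lem:inverse_prox_map}. But the proof has a genuine gap exactly where you flag it, at step (c), and that gap is the entire content of the lower bound. You need, for the four-component mixture $Q=\tfrac12 P_{\mu_\tau,I}+\tfrac12 P_{\mu_{\tau'},I}$, the quantitative statement $R^{*}_{Q}-\bar{\Phi}(\tfrac r2)\gtrsim e^{-r^2/8}\rho^2$ over \emph{all} measurable classifiers, i.e.\ over all pairs of $2\varepsilon$-separated acceptance regions, not just halfspaces. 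Theorem~\ref{thm:robust_bayes_optimality} and \cite{NIPS2019_8968} give exact robust-Bayes optimality only for a single symmetric Gaussian pair; nothing in the paper (or in your sketch) establishes even the qualitative claim that the optimizer for $Q$ is a halfspace, let alone the second-order stability with the correct $e^{-r^2/8}$ constant after $\varepsilon$-erosion. Your halfspace computation shows the gap for linear $f$, but the ``strengthening then upgrades it'' sentence is an unproven isoperimetric-stability assertion for eroded sets under a non-Gaussian class-conditional law; as written, the argument does not close.

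It is worth contrasting with the paper's route, which avoids this difficulty entirely. The paper proves (Lemma~\ref{lem:lb_robust_error}) that for any classifier $f$ and any $\|\mu'-\mu\|_B\le\varepsilon$ one has $\RobustRisk(f)\ge \rstd_{\mu',\Sigma}(f)$, and (Lemma~\ref{lem:same_bayes_optimality}) that with $\mu'=\mu-\Zpop$ the two optimal risks coincide; combined with the surjection of Lemma~\ref{lem:inverse_prox_map} from $D_{\mathrm{std}}(r)$ onto suitable elements of $D_{B,\varepsilon}(r)$, this yields Lemma~\ref{thm:minimax_reduction}, reducing the robust minimax lower bound to the \emph{standard} LDA minimax lower bound of \cite{li2017minimax}, which is quoted as a black box. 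In other words, the quantitative ``no classifier can be near-optimal for two nearby hypotheses'' estimate is only ever needed in the adversary-free setting, where it is already in the literature. If you want to salvage your direct approach, the cleanest fix is to insert the paper's Lemma~\ref{lem:lb_robust_error} into your step (c): it gives $R_Q(f)\ge \tfrac12\rstd_{\mu_\tau-z_\tau,I}(f)+\tfrac12\rstd_{\mu_{\tau'}-z_{\tau'},I}(f)$ with matching optimal values, after which your Assouad computation becomes the standard (non-robust) one and the unproven stability claim is no longer needed.
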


Putting together with the upper bound in Theorem~\ref{thm:main_upper_bound}, 
this lower bound matches almost exactly with the upper bound in the regime of interest, therefore they are both optimal up to lower order terms. 

The main technique we used for this lower bound is with a flavor of black-box reduction. In particular, we show that the minimax \emph{robust} excess risk in $D_{B, \varepsilon}(r)$ cannot be smaller than the minimax \emph{standard} excess risk in $D_{\mathrm{std}}(r)$. In other words, 

\begin{lemma}\label{thm:minimax_reduction}
	The minimax excess error satisfies 
	\begin{equation}\label{eqn:minimax_reduction}
	\min_{\widehat{f}} \max_{(\mu, \Sigma) \in D_{B, \varepsilon}(r)} [\RobustRisk(\widehat{f}) - \RobustOptRisk ] \nonumber  \\
	\geq~  \min_{\widehat{f}} \max_{(\mu', \Sigma) \in D_{\mathrm{std}}(r)}  [\rstd_{\mu', \Sigma}(\widehat{f}) -\CleanOptRisk].
	\end{equation}	
\end{lemma}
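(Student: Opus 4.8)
The plan is to establish \eqref{eqn:minimax_reduction} by an instance-to-instance reduction. We construct a map $\Psi$ that sends each standard instance $(\mu',\Sigma)\in D_{\mathrm{std}}(r)$ to a robust instance $\Psi(\mu',\Sigma)\in D_{B,\varepsilon}(r)$ on which the $(\|\cdot\|_B,\varepsilon)$-adversary's optimal move is a single fixed shift that exactly undoes $\Psi$; hence robust classification on $\Psi(\mu',\Sigma)$ is at least as hard as standard classification on $(\mu',\Sigma)$, and sweeping suprema and infima transfers the bound. Since $r$ is bounded away from $0$, every $(\mu',\Sigma)\in D_{\mathrm{std}}(r)$ has $\mu'\neq 0$, and we take $\Sigma\succ0$ throughout.

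\textbf{The embedding.} Given $(\mu',\Sigma)\in D_{\mathrm{std}}(r)$, let $z_0$ be any maximizer of $\langle\Sigma^{-1}\mu',z\rangle$ over $\{\|z\|_B\le1\}$ (which exists because $B$ is closed, bounded and origin-symmetric), set $z^{\star}\defn\varepsilon z_0$, and define $\Psi(\mu',\Sigma)\defn(\mu'+z^{\star},\Sigma)$. The first task is to verify that $z_\Sigma(\mu'+z^{\star})=z^{\star}$, i.e.\ that $z^{\star}$ already solves program \eqref{eqn:z0} at mean $\mu'+z^{\star}$. This is a first-order optimality check: $z^{\star}$ lies on the boundary $\{\|z\|_B=\varepsilon\}$, the negative gradient of $z\mapsto\|(\mu'+z^{\star})-z\|_{\Sigma^{-1}}^2$ at $z^{\star}$ equals $2\Sigma^{-1}\mu'$, and $\Sigma^{-1}\mu'/\|\Sigma^{-1}\mu'\|_{B*}$ is a subgradient of $\|\cdot\|_B$ at $z^{\star}$ exactly because $\langle\Sigma^{-1}\mu',z^{\star}\rangle=\varepsilon\|\Sigma^{-1}\mu'\|_{B*}$ --- the H\"older equality certifying that $z_0$ maximizes $\langle\Sigma^{-1}\mu',\cdot\rangle$ on the unit ball. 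Granting this, Theorem~\ref{thm:robust_bayes_optimality} and Definition~\ref{def:family_advsnr} yield: (i) $\mu'+z^{\star}-z_\Sigma(\mu'+z^{\star})=\mu'$, so $\AdvSNR_{B,\varepsilon}(\Psi(\mu',\Sigma))=2\|\mu'\|_{\Sigma^{-1}}=\StdSNR(\mu',\Sigma)=r$ and $\Psi$ does map into $D_{B,\varepsilon}(r)$; (ii) the robust Bayes direction \eqref{eqn:def_w_0} of $\Psi(\mu',\Sigma)$ is $\Sigma^{-1}\mu'$, the standard Bayes direction of $(\mu',\Sigma)$; and (iii) $\bar{\Phi}(\|\mu'\|_{\Sigma^{-1}})$ is simultaneously the optimal robust risk of $\Psi(\mu',\Sigma)$ and the optimal standard risk of $(\mu',\Sigma)$.

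\textbf{Risk domination.} For \emph{any} classifier $g$ one has $R^{B,\varepsilon}_{\mu'+z^{\star},\Sigma}(g)\ge\rstd_{\mu',\Sigma}(g)$: if $(x,y)\sim P_{\mu'+z^{\star},\Sigma}$ then $(x-yz^{\star},y)\sim P_{\mu',\Sigma}$, and since $B$ is origin-symmetric the perturbation $\delta=-yz^{\star}$ is admissible ($\|\delta\|_B=\|z^{\star}\|_B=\varepsilon$), so the robust-loss event contains $\{g(x-yz^{\star})\neq y\}$. Combined with the coincidence of optimal risks from (iii),
\[
  R^{B,\varepsilon}_{\mu'+z^{\star},\Sigma}(g)-\bar{\Phi}(\|\mu'\|_{\Sigma^{-1}})\ \ge\ \rstd_{\mu',\Sigma}(g)-\bar{\Phi}(\|\mu'\|_{\Sigma^{-1}})
\]
for every $g$; that is, the robust excess risk at $\Psi(\mu',\Sigma)$ dominates the standard excess risk at $(\mu',\Sigma)$.

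\textbf{Assembling the minimaxes, and the main obstacle.} Given any robust estimator $\widehat f$, one would build a standard estimator $\widehat g$ by feeding $\widehat f$ the relabelled sample $\{(x_i+y_iz^{\star},y_i)\}_{i=1}^n$ --- which is exactly an i.i.d.\ draw from $P_{\Psi(\mu',\Sigma)}$ --- and returning its classifier; the risk domination then controls the standard excess risk of $\widehat g$ on $(\mu',\Sigma)$ by the robust excess risk of $\widehat f$ on $\Psi(\mu',\Sigma)$, and taking $\sup$ over $D_{\mathrm{std}}(r)$ and then $\inf$ over $\widehat f$ gives \eqref{eqn:minimax_reduction}. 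The genuinely delicate point --- which I expect to be the crux --- is that the cancelling shift $z^{\star}$ depends on the unknown pair $(\mu',\Sigma)$: no single shift works over all of $D_{\mathrm{std}}(r)$, because $\{\Sigma^{-1}\mu':\|\mu'\|_{\Sigma^{-1}}=r/2\}$ sweeps an entire ellipsoidal shell of directions, whereas the normal cone of $\{\|z\|_B\le\varepsilon\}$ at any boundary point cannot contain a full sphere of directions. Thus the reduction is not a verbatim black box: one forms a plug-in shift $\widehat z\defn\varepsilon\cdot\mathrm{argmax}_{\|z\|_B\le1}\langle\widehat\Sigma^{-1}\widehat\mu,z\rangle$ (with $\widehat\mu,\widehat\Sigma$ as in Algorithm~\ref{alg:estimator}) computed on a held-out half of the sample, so that the relabelled other half is, conditionally, i.i.d.\ $P_{\mu'+\widehat z,\Sigma}$; then uses $\widehat\mu\to\mu'$, $\widehat\Sigma\to\Sigma$ at rate $O_P(\sqrt{d/n})$ together with continuity of $z\mapsto z_\Sigma(z)$ and of the $\mathrm{argmax}$ to place $(\mu'+\widehat z,\Sigma)$ within an $o_P(1)$ AdvSNR-perturbation of $D_{B,\varepsilon}(r)$, and invokes the robust guarantee there. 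Making sure the residual from $\widehat z\neq z^{\star}$ does not overwhelm the target rate is the main work; when only Theorem~\ref{thm:main_lower_bound} is needed, a cleaner route is to run the whole argument over a structured hard sub-family of $D_{\mathrm{std}}(r)$ with common covariance and common (hence known) cancelling shift, on which the relabelling is exact and no residual arises.
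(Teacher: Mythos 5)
Your embedding, optimality check, and risk-domination steps are exactly the paper's: your $z^{\star}=\varepsilon\cdot\mathrm{argmax}_{\|z\|_B\le1}\langle\Sigma^{-1}\mu',z\rangle$ is the paper's $\widetilde{z}_{\Sigma}(\mu')$ from Lemma~\ref{lem:inverse_prox_map} (verified there by the same first-order condition), your fixed-shift domination $R^{B,\varepsilon}_{\mu'+z^{\star},\Sigma}(g)\ge \rstd_{\mu',\Sigma}(g)$ is Lemma~\ref{lem:lb_robust_error}, and the coincidence of the robust and standard Bayes risks is Lemma~\ref{lem:same_bayes_optimality}; these are combined in Corollary~\ref{corollary:excess_lb}. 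Where you diverge is the assembly. The paper never relabels the sample and never manufactures a standard estimator from a robust one: it applies Corollary~\ref{corollary:excess_lb} to the realized classifier $\widehat{f}$, obtains $\rstd_{\mu',\Sigma}(\widehat{f})-\CleanOptRisk\le R^{B,\varepsilon}_{\mu,\Sigma}(\widehat{f})-\RobustOptRisk$ for the image point $(\mu,\Sigma)=(\mu'+z^{\star},\Sigma)$, and then directly sweeps the maximum over $D_{\mathrm{std}}(r)$ and the minimum over $\widehat{f}$.

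The obstacle you single out --- that the cancelling shift depends on the unknown $(\mu',\Sigma)$, so no standard learner can simulate the shifted sample --- is therefore not something the paper resolves; it is something the paper's sweep passes over. In the paper's final chain the symbol $\widehat{f}$ on the standard side should be trained on data from $P_{\mu',\Sigma}$ while on the robust side it is trained on data from $P_{\mu'+z^{\star},\Sigma}$; Corollary~\ref{corollary:excess_lb} is a pointwise statement about a fixed classifier and does not by itself bridge the two training distributions. So your diagnosis of the crux is legitimate as a critique of the clean black-box inequality. That said, your proposal does not close the gap either: the sample-split plug-in shift places $(\mu'+\widehat{z},\Sigma)$ only approximately in $D_{B,\varepsilon}(r)$, so the robust maximum over that class does not directly apply, and you explicitly defer the residual control; the structured-subfamily route (common covariance and common shift --- e.g.\ for $\ell_2$ with $\Sigma=I$ the map $\mu'\mapsto\mu'+z^{\star}$ is a fixed dilation of the sphere $\|\mu'\|_2=r/2$, and for $\ell_\infty$ a local packing around a generic $\mu_0$ has constant $z^{\star}=\varepsilon\,\sgn(\mu_0)$) is the one that actually delivers Theorem~\ref{thm:main_lower_bound}, but you only gesture at it. In short: the same reduction as the paper, a genuine observation about the paper's final sweep, and a repair that is sketched rather than carried out.
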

\vspace{-0.3cm}
The right hand side of \eqref{eqn:minimax_reduction}, i.e. the minimax rate for standard classification, is well-studied in the existing literature of Fisher's LDA. For example, \cite{li2017minimax} proved the following lower bound:

\begin{theorem}[Theorem 1 of \cite{li2017minimax}] 
	Suppose the covariance matrix satisfies $\Sigma =I$ and is known to the learner, then we have the minimax lower bound 
	\begin{equation*}
	\min_{\widehat{f}} \max_{(\mu', I) \in D_{\mathrm{std}}(r)}  [\rstd_{\mu', \Sigma}(\widehat{f}) -\CleanOptRisk] \\
	\ge \Omega_P\left(e^{-\frac{1}{8} r^2}\cdot \frac{1}{r} \cdot \frac{d}{n} \right).
	\end{equation*}	
\end{theorem}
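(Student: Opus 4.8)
The plan is to prove the displayed minimax lower bound, i.e.\ the Fisher--LDA baseline with $\Sigma = I_d$ known.

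\textbf{Reduction to a Gaussian location problem.} Over $D_{\mathrm{std}}(r)$ with $\Sigma = I_d$ the parameter is a mean $\mu'$ with $\|\mu'\|_2 = r/2$, and only its direction $\theta := (2/r)\mu' \in S^{d-1}$ is unknown; by Theorem~\ref{thm:robust_bayes_optimality} at $\varepsilon = 0$ the Bayes classifier is $\sgn(\theta^\top x)$ with error exactly $\bar\Phi(r/2)$. The statistic $\widehat\mu := \tfrac1n\sum_{i=1}^n y_i x_i \sim \mathcal N(\mu', \tfrac1n I_d)$ is sufficient for $\mu'$ (the labels being ancillary), so without loss of generality $\widehat f$ is a function of $\widehat\mu$ alone, and the task reduces to recovering a direction on the sphere of radius $r/2$ from one Gaussian observation of precision $n$. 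I would show that for every such $\widehat f$ there is a $\theta$ with $\rstd_{\mu', I}(\widehat f) - \bar\Phi(r/2) \gtrsim \tfrac{e^{-r^2/8}}{r}\cdot\tfrac dn$, via a multiple-hypothesis (Assouad) argument, after first expressing the excess classification risk as a squared error on $\theta$.

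\textbf{Excess risk versus recovery of the optimal halfspace.} For a linear rule $\sgn(v^\top x)$, Lemma~\ref{lem:error_linear} at $\varepsilon = 0$ gives the exact excess risk $\bar\Phi(\tfrac r2\cos\alpha) - \bar\Phi(\tfrac r2)$ with $\alpha = \angle(v,\theta)$; since $\sgn(\theta^\top x)$ is optimal the first-order term vanishes and the expansion around $\alpha = 0$ is $\tfrac r4\phi(\tfrac r2)\alpha^2(1+o(1)) = \tfrac{1}{4\sqrt{2\pi}}\,r\,e^{-r^2/8}\,\| v/\|v\|_2 - \theta \|_2^2(1+o(1))$. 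For a general classifier $g$ the identity $\rstd_{\mu'}(g) - \bar\Phi(\tfrac r2) = \E_x[\,|\tanh(\mu'^\top x)|\,\mathbb{I}(g(x)\neq\sgn(\mu'^\top x))\,]$ (from $\eta_{\mu'}(x) = \tfrac12(1 + \tanh(\mu'^\top x))$) shows that small excess risk forces $g$ to agree with the optimal halfspace outside a boundary slab of width $O(r^{-1}\log(1/\mathrm{excess}))$; extracting a readout direction $\widehat\theta(g)$ from $g$'s decision region then yields $\rstd_{\mu'}(g) - \bar\Phi(\tfrac r2) \gtrsim \tfrac{e^{-r^2/8}}{r}\,\|\widehat\theta(g)-\theta\|_2^2$, matching the linear case up to constants. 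Thus small excess risk entails an accurate estimate of $\theta$.

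\textbf{Hard sub-family and Assouad.} Fix $\rho \asymp 1/\sqrt n$ with $(d-1)\rho^2 < (r/2)^2$ (automatic in the only nontrivial regime $d \lesssim r^2 n$, since otherwise $\tfrac{e^{-r^2/8}}{r}\tfrac dn \gtrsim 1$). For $\omega \in \{\pm1\}^{d-1}$ set $\mu'_\omega := \sqrt{(r/2)^2 - (d-1)\rho^2}\,e_d + \rho\sum_{j<d}\omega_j e_j$, so $\|\mu'_\omega\|_2 = r/2$, hence $(\mu'_\omega, I_d)\in D_{\mathrm{std}}(r)$, and $\theta_\omega$ has $j$-th coordinate $(2\rho/r)\omega_j$. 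Flipping one coordinate of $\omega$ moves the mean by $2\rho e_j$, so $D(P_{\mu'_\omega}^{\otimes n}\,\|\,P_{\mu'_{\omega'}}^{\otimes n}) = \tfrac n2\|\mu'_\omega - \mu'_{\omega'}\|_2^2 = 2n\rho^2 = O(1)$ and neighbouring hypotheses are statistically indistinguishable, while by Step~2 a classifier that gets the sign of coordinate $j$ of $\theta_\omega$ wrong pays $\gtrsim \tfrac{e^{-r^2/8}}{r}\rho^2 \asymp \tfrac{e^{-r^2/8}}{rn}$ in excess risk. Assouad's lemma then gives, for every $\widehat f$, $\max_{\omega}[\rstd_{\mu'_\omega}(\widehat f) - \bar\Phi(\tfrac r2)] \gtrsim (d-1)\cdot\tfrac{e^{-r^2/8}}{rn} \asymp \tfrac{e^{-r^2/8}}{r}\tfrac dn$; since the excess risk is bounded, the $\Omega_P$ form follows by the standard conversion of an expected-risk lower bound to one in probability.

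\textbf{Main obstacle.} The delicate point is the general-classifier case in Step~2: an arbitrary $g$ has no intrinsic slope, so one must show that bounded excess risk pins $g$ to the optimal halfspace outside a thin slab and then quantify, through lower bounds on Gaussian ``wedge'' integrals $\E_x[\,|\tanh(\mu'^\top x)|\,\mathbb{I}(\mu_\omega'^\top x,\ \mu_{\omega'}'^\top x\text{ have opposite signs})\,]$, that disagreement correlated with $\omega_j$ costs order $\rho^2 e^{-r^2/8}/r$. This is exactly where the exponent $\tfrac18$ (evaluating $\bar\Phi$ at $r/2$) and the polynomial factor $\tfrac1r$ (from $\bar\Phi''(r/2) = \tfrac r2\phi(r/2)$ combined with the $r/2$-scaling of the mean on the sphere) are fixed; the rest is routine Assouad bookkeeping, and the two-point (Le Cam) specialization already produces the $1/n$ dependence without the $d$ factor.
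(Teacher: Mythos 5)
The paper does not actually prove this statement: it is imported verbatim as Theorem~1 of \cite{li2017minimax} and used as a black box, the paper's own contribution being the reduction in Lemma~\ref{thm:minimax_reduction} that transfers this standard-classification lower bound to the adversarial class $D_{B,\varepsilon}(r)$. So there is no internal proof to compare against; what you have written is an attempt to reconstruct the cited result, and it should be judged on those terms.

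As a reconstruction, your skeleton is the right one and the bookkeeping is consistent: sufficiency of $\widehat{\mu}\sim\mathcal N(\mu',\tfrac1n I)$, a hypercube of means on the sphere $\|\mu'\|_2=r/2$ with per-coordinate perturbation $\rho\asymp 1/\sqrt n$, KL of order one between neighbouring hypotheses, and the per-coordinate excess-risk cost $\tfrac r4\phi(\tfrac r2)\cdot(\rho/r)^2\asymp e^{-r^2/8}\rho^2/r$, which after summing over $d-1$ coordinates yields the claimed $e^{-r^2/8}\cdot r^{-1}\cdot d/n$. The genuine gap is exactly where you flag it, and it is not a minor one: Assouad needs a two-point separation statement of the form $[\rstd_{\mu'_\omega}(g)-\rstd_{\mu'_\omega}*]+[\rstd_{\mu'_{\omega^{(j)}}}(g)-\rstd_{\mu'_{\omega^{(j)}}}*]\gtrsim e^{-r^2/8}\rho^2/r$ for \emph{every} measurable $g$, not only for linear rules, and the device you propose --- ``extract a readout direction $\widehat\theta(g)$ from $g$'s decision region'' --- is a placeholder rather than a construction: an arbitrary decision set has no canonical direction, and the inequality $\rstd_{\mu'}(g)-\bar{\Phi}(r/2)\gtrsim \tfrac{e^{-r^2/8}}{r}\|\widehat\theta(g)-\theta\|_2^2$ is precisely the assertion that needs proof. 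The standard repair avoids extracting an estimator altogether: one lower-bounds the \emph{sum} of the two excess risks by the $|\tanh(\mu'^\top x)|$-weighted Gaussian measure of the wedge on which the two Bayes rules $\sgn(\mu_\omega'^\top x)$ and $\sgn(\mu_{\omega^{(j)}}'^\top x)$ disagree, since any $g$ must err against at least one of the two hypotheses throughout that wedge. Until that wedge integral is computed and shown to be $\gtrsim e^{-r^2/8}\rho^2/r$, the lower bound is not established; carrying it out is essentially the technical content of \cite{li2017minimax}, which is why the paper cites the result rather than reproving it.
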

\vspace{-0.3cm}
Since the parameter space considered in \cite{li2017minimax} is a subset of $D_{\mathrm{std}}(r)$, we have \eqref{eqn:minimax_reduction} is also lower bounded by $\Omega_P\left(e^{-\frac{1}{8} r^2}\cdot \frac{1}{r} \cdot \frac{d}{n} \right)$, therefore proves Theorem \ref{thm:main_lower_bound}.

\paragraph{Comparisons to \cite{schmidt2018adversarially} and \cite{NIPS2019_8968}} 

To the best of our knowlege, Theorem \ref{thm:main_lower_bound} is the first minimax-type lower bound in adversarially robust classification. Existing works \cite{schmidt2018adversarially} and \cite{NIPS2019_8968} also studied the sample complexity of robust learning in conditional Gaussian model. However, both of them simplified the problem and considered the case when $\mu$ follows from a prior distribution $\mathcal{N}(0, I)$. This assumption is crucial to their analysis, otherwise the posterior distribution of $\mu$ given training data is intractable. Hence, the technical tool used in prior works is not sufficient for developing such a minimax lower bound of our interest.

\paragraph{Proof Sketch:} Here we also provide a proof sketch to Lemma~\ref{thm:minimax_reduction}. More details can be found in the Section~\ref{Sec:details}.
\paragraph{Step 1: Connecting standard and robust risks} In Lemma \ref{lem:lb_robust_error}, we prove that for any classifier $f$ and a perturbed distribution $P_{\mu', \Sigma}$, where $\|\mu' - \mu\|_B \leq \varepsilon$, the robust risk of $f$ on $P_{\mu, \Sigma}$ is always lower bounded by the standard risk on $P_{\mu', \Sigma}$. 

As a consequence, in Corollary \ref{corollary:excess_lb} we show that if we choose $\mu' = \mu - z_{\Sigma}(\mu)$, then the robust excess risk of $f$ on $P_{\mu, \Sigma}$ is always lower bounded by the standard excess risk on $P_{\mu', \Sigma}$.

\paragraph{Step 2: A mapping fron $D_{\mathrm{std}}(r)$ to $ D_{B, \varepsilon}(r)$} To prove Lemma~\ref{thm:minimax_reduction}, we only need to answer the following question: for any $(\mu', \Sigma) \in D_{\mathrm{std}}(r)$, can we find a $(\mu, \Sigma) \in D_{B, \varepsilon}(r)$, so that the robust excess risk on $P_{\mu, \Sigma}$ is always lower bounded by the standard excess risk on $P_{\mu', \Sigma}$? We give an affirmative answer to this question. The proof in a combination of Corollary \ref{corollary:excess_lb} showed in Step 1 and an examination of optimality condition in the optimization problem \ref{eqn:z0}.


\section{Comparing Adversarial and Standard Rates}

Putting the upper and lower bounds together provides a comprehensive view of the statistical aspect of the adversarially robust classification. A key question to ask is that: How much does the classification error blows up as the price of being adversarially robust? 

To answer this question, it is sufficient to compare the optimal risks in both cases. Informally,
one can write the logarithm ratio between two rates as 
\begin{align}\label{eqn:rate_comparison}
&\log\left(\frac{\rm{AdvRate}}{\rm{StdRate}}\right)  \approx  \frac{1}{2}\left( \|\mu - z_{\Sigma}(\mu)\|_{\Sigma^{-1}}^2 -  \|\mu \|_{\Sigma^{-1}}^2\right).
\end{align}

From the definition of $z_{\Sigma}(\mu)$ in \eqref{eqn:z0}, we can see that $\|\mu - z_{\Sigma}(\mu)\|_{\Sigma^{-1}}^2 \leq  \|\mu \|_{\Sigma^{-1}}^2$, hence adversarial rate is always slower. 

To analyze this difference quantitively and interpretably, we consider the special case where $\Sigma = I$ and the adversary is $\ell_2$ bounded. Similar results hold for other adversaries as well. The key observation is that depending on the different scale of $\|\mu\|_2$ and the budget of perturbation $\varepsilon$, this difference can be as small as $O(1)$, or as large as $\Omega(\exp(d))$.

\begin{proposition}
	\label{prof_consequence}
	When $\Sigma =I$ and the adversarial perturbation satisfies $\|\delta\|_2 \leq \varepsilon$, then
	\begin{itemize}
		\item When $\varepsilon \leq O(\frac{1}{\|\mu\|_2})$, the adversarial rate is at most $O(1)$ times slower than the standard rate.
		\item When $\|\mu\|_2 \geq \Omega(\log d)$ and $\varepsilon \geq \Omega(\frac{\log d}{\|\mu\|_2})$, the adversarial rate can be slower than the standard rate by a $poly(d)$ factor. 
		\item  When $\|\mu\|_2 \geq \Omega(\sqrt{d})$ and $\varepsilon \geq \Omega(\frac{d}{\|\mu\|_2})$, the adversarial rate can be slower than the standard rate by an $\exp(d)$ factor. 
	\end{itemize}
\end{proposition}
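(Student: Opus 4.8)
The proposition reduces to elementary estimates on the standard Gaussian tail $\bar{\Phi}$. The first step is to evaluate the program \eqref{eqn:z0} in the special case $\Sigma = I$ with $B$ the Euclidean unit ball: there $\Zpop = \argmin_{\|z\|_2 \le \varepsilon}\|\mu - z\|_2^2$ is simply the metric projection of $\mu$ onto the ball of radius $\varepsilon$, i.e. $\Zpop = \min\{1,\varepsilon/\|\mu\|_2\}\,\mu$, so that $\|\mu - \Zpop\|_{\Sigma^{-1}} = (\|\mu\|_2 - \varepsilon)_+$. By Theorem~\ref{thm:robust_bayes_optimality} the optimal robust risk is $\bar{\Phi}\big((\|\mu\|_2 - \varepsilon)_+\big)$, while the optimal standard risk is $\bar{\Phi}(\|\mu\|_2)$. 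Writing $a \defn \|\mu\|_2$ and $b \defn (\|\mu\|_2 - \varepsilon)_+$ --- so that $0 \le b \le a$ and $a - b = \min\{\|\mu\|_2,\varepsilon\}$ --- the relevant quantity is exactly the ratio $\bar{\Phi}(b)/\bar{\Phi}(a)$, whose logarithm \eqref{eqn:rate_comparison} predicts to be of order $\tfrac12(a^2 - b^2)$.

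Next I would record the two ingredients that make this rigorous. The first is the two-sided Mills bound $\tfrac{1}{\sqrt{2\pi}}\tfrac{x}{1+x^2}e^{-x^2/2} \le \bar{\Phi}(x) \le \tfrac{1}{\sqrt{2\pi}}\tfrac{1}{x}e^{-x^2/2}$ for $x > 0$, together with the trivial $\bar{\Phi}(1) \le \bar{\Phi}(x) \le \tfrac12$ for $x \in [0,1]$. Combining them yields, for $b \ge 1$, the sandwich $\tfrac12 e^{(a^2-b^2)/2} \le \bar{\Phi}(b)/\bar{\Phi}(a) \le \tfrac{2a}{b}\,e^{(a^2-b^2)/2}$; and for $b \le 1$ (equivalently $a \le 1+\varepsilon$) the bounds $\bar{\Phi}(b)/\bar{\Phi}(a) \ge c\,a\,e^{a^2/2}$ and $\bar{\Phi}(b)/\bar{\Phi}(a) \le (2\bar{\Phi}(a))^{-1}$ for a universal constant $c > 0$. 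The second ingredient is the identity $a^2 - b^2 = \|\mu\|_2^2 - \big((\|\mu\|_2-\varepsilon)_+\big)^2$, which equals $\varepsilon(2\|\mu\|_2 - \varepsilon)$ when $\varepsilon \le \|\mu\|_2$ and equals $\|\mu\|_2^2$ otherwise, so that in all cases $\tfrac12\min\{\|\mu\|_2^2,\|\mu\|_2\varepsilon\} \le \tfrac12(a^2 - b^2) \le \|\mu\|_2\varepsilon$.

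The three bullets then follow by substituting the hypothesized scalings. For the first, $\varepsilon \le O(1/\|\mu\|_2)$ forces $\|\mu\|_2\varepsilon = O(1)$, hence $\tfrac12(a^2-b^2) = O(1)$; if $b \ge 1$ then $a \ge 1$, so $\varepsilon = O(1)$ and the prefactor $\tfrac{2a}{b} = 2 + 2\varepsilon/b \le 2 + 2\varepsilon = O(1)$, while if $b \le 1$ then $a \le 1+\varepsilon \le 1 + O(1/a)$ forces $a = O(1)$, whence $\bar{\Phi}(a) \ge \bar{\Phi}(O(1))$ is bounded below and the ratio is again $O(1)$. For the second, $\|\mu\|_2 \ge \Omega(\log d)$ and $\varepsilon \ge \Omega(\log d/\|\mu\|_2)$ give both $\|\mu\|_2^2 \ge \Omega(\log^2 d)$ and $\|\mu\|_2\varepsilon \ge \Omega(\log d)$, so $\tfrac12(a^2-b^2) \ge \Omega(\log d)$ and $e^{\Omega(\log d)} = \mathrm{poly}(d)$: when $b \ge 1$ the lower prefactor $\tfrac12$ already gives $\bar{\Phi}(b)/\bar{\Phi}(a) \ge \mathrm{poly}(d)$, and when $b \le 1$ we get $\bar{\Phi}(b)/\bar{\Phi}(a) \ge c\,a\,e^{a^2/2} \ge e^{\Omega(\log^2 d)}$, which is even larger. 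The third bullet is the second with $\log d$ replaced by $d$: $\|\mu\|_2 \ge \Omega(\sqrt d)$ and $\varepsilon \ge \Omega(d/\|\mu\|_2)$ yield $\tfrac12(a^2-b^2) \ge \Omega(d)$, hence the ratio is $e^{\Omega(d)}$.

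The only delicate point --- and the step I expect to require the most care --- is the bookkeeping on the polynomial prefactors in the Mills bounds, in particular isolating the degenerate range $b \le 1$ (i.e. $\|\mu\|_2 \le 1+\varepsilon$) separately in each regime, where the estimate $\bar{\Phi}(b) \le \tfrac{1}{b\sqrt{2\pi}}e^{-b^2/2}$ degenerates and one must instead use $\bar{\Phi}(b) \le \tfrac12$; the rest is routine algebra. I would also remark that the same argument extends to an arbitrary origin-symmetric convex $B$ --- it suffices to use $\|\mu\|_{\Sigma^{-1}} - \|\Zpop\|_{\Sigma^{-1}} \le \|\mu - \Zpop\|_{\Sigma^{-1}} \le \|\mu\|_{\Sigma^{-1}}$ together with the norm comparison between $\|\cdot\|_{B*}$ and $\|\cdot\|_{\Sigma^{-1}}$ to control $a-b$ --- which is why the proposition is stated only in the clean $\Sigma = I$, $\ell_2$ case.
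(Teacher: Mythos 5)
Your argument is correct, and its arithmetic core coincides with the paper's: both reduce to the observation that $\Zpop$ is the Euclidean projection of $\mu$ onto the $\varepsilon$-ball, so $\|\mu-\Zpop\|_2=(\|\mu\|_2-\varepsilon)_+$ and the exponent gap is $\tfrac12\bigl(\|\mu\|_2^2-(\|\mu\|_2-\varepsilon)_+^2\bigr)=\varepsilon\|\mu\|_2-\tfrac12\varepsilon^2\in[\tfrac12\varepsilon\|\mu\|_2,\varepsilon\|\mu\|_2]$ (for $\varepsilon\le\|\mu\|_2$), after which the three bullets are substitution. Where you diverge is in what you take the ``rates'' to be: the paper compares the minimax excess-risk rates $e^{-r^2/8}\tfrac{d}{n}$ from Theorems~\ref{thm:main_upper_bound} and~\ref{thm:main_lower_bound} with $r$ the respective SNR, so the ratio is \emph{exactly} $\exp(\varepsilon\|\mu\|_2-\tfrac12\varepsilon^2)$ and no tail estimates are needed; you instead compare the Bayes-optimal risks $\bar{\Phi}\bigl((\|\mu\|_2-\varepsilon)_+\bigr)$ and $\bar{\Phi}(\|\mu\|_2)$ via two-sided Mills bounds, which forces you to track the polynomial prefactor $a/b$ and to split off the degenerate range $b\le 1$. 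Your extra bookkeeping is carried out correctly (the prefactor analysis in each bullet checks out), and it buys a statement about the actual achievable risks rather than about the excess-risk rates, at the cost of some case analysis the paper avoids entirely; it also gracefully covers the boundary case $\varepsilon\ge\|\mu\|_2$, which the paper simply excludes. Since the proposition only asserts $O(1)$, $\mathrm{poly}(d)$, and $\exp(d)$ multiplicative gaps, both readings yield the stated conclusions, so the discrepancy in interpretation is harmless here.
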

In general, the difference is more significant when $\varepsilon$ or $\|\mu\|_2$ is larger.
This example demonstrates a clear tradeoff between being adversarial robust and obtaining the optimal
accuracy, in particular in the case of large perturbations.


\section{Proofs and further details}
\label{Sec:details}
In this section, we provide detailed proofs for our main results. 
The proof details of some lemmas are deferred to our supplementary file. 

\subsection{Proof of Theorem~\ref{thm:main_upper_bound}}

Before presenting our analysis, we first state a standard lemma about the convergence of empirical mean and covariance. 

\begin{lemma} [Convergence of the empirical mean and covariance (see, e.g. \citet{wainwright2019high})] \label{lem:mean_convergence}
	The convergence rates of the empirical mean $\widehat{\mu}$ and $\widehat{\Sigma}$ to the corresponding ground truth satisfy 
	\begin{equation*}
	\|\widehat{\mu} - \mu \|_{\Sigma^{-1}} = O_P\left(\sqrt{\frac{d}{n}}\right),
	\end{equation*}
	and
	\begin{equation*}
	\| \Sigma^{-\half} \widehat{\Sigma} \Sigma^{-\half} - I\|_{op} = O_P\left(\sqrt{\frac{d}{n}}\right).
	\end{equation*}
\end{lemma}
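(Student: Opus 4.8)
}
The plan is to strip away the mixture structure via a single observation — that the label-weighted samples $z_i \defn y_i x_i$ form an i.i.d.\ Gaussian sample with mean $\mu$ and covariance $\Sigma$ — and then to invoke textbook concentration for the empirical mean and empirical covariance of a Gaussian. To set this up, I would write $x_i = y_i \mu + \Sigma^{1/2} g_i$ with $g_i \sim_{i.i.d.} \mathcal{N}(0, I_d)$ independent of the labels $y_i$ (this is just the definition of the conditional Gaussian model). Since $y_i^2 = 1$,
\begin{equation*}
z_i \;=\; y_i x_i \;=\; \mu + \Sigma^{1/2}\,(y_i g_i),
\end{equation*}
and because $y_i \in \{\pm 1\}$ is independent of $g_i$ and $\mathcal{N}(0,I_d)$ is symmetric, $y_i g_i \sim \mathcal{N}(0,I_d)$, so that $z_i \sim_{i.i.d.} \mathcal{N}(\mu, \Sigma)$. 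Since also $x_i x_i^T = z_i z_i^T$,
\begin{equation*}
\widehat{\mu} = \frac1n\sum_{i=1}^n z_i,
\qquad
\widehat{\Sigma} = \frac1n\sum_{i=1}^n z_i z_i^T - \widehat{\mu}\widehat{\mu}^T = \frac1n\sum_{i=1}^n (z_i - \widehat{\mu})(z_i - \widehat{\mu})^T,
\end{equation*}
i.e.\ $\widehat{\mu}$ and $\widehat{\Sigma}$ are precisely the empirical mean and biased empirical covariance of the i.i.d.\ Gaussian sample $z_1,\dots,z_n$; everything afterward is a reduction to this classical setting.

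For the mean, I would set $h_i \defn \Sigma^{-1/2}(z_i - \mu) \sim_{i.i.d.} \mathcal{N}(0, I_d)$, so that $\widehat{\mu} - \mu = \Sigma^{1/2}\big(\tfrac1n\sum_i h_i\big)$ and hence $n\,\|\widehat{\mu} - \mu\|_{\Sigma^{-1}}^2 = \big\|\tfrac{1}{\sqrt n}\sum_i h_i\big\|_2^2 \sim \chi^2_d$; a standard $\chi^2$ tail bound then gives $\|\widehat{\mu} - \mu\|_{\Sigma^{-1}} = O_P(\sqrt{d/n})$.

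For the covariance, whitening via $z_i \mapsto \Sigma^{-1/2} z_i$ reduces to $\Sigma = I_d$, where the claim becomes $\|\widehat{\Sigma} - I_d\|_{op} = O_P(\sqrt{d/n})$ for the sample covariance of $n$ i.i.d.\ standard Gaussian vectors. Recentering gives $\widehat{\Sigma} = \frac1n\sum_i h_i h_i^T - (\widehat{\mu}-\mu)(\widehat{\mu}-\mu)^T$; I would bound the Wishart term by the nonasymptotic estimate $\big\|\frac1n\sum_i h_i h_i^T - I_d\big\|_{op} = O_P\big(\sqrt{d/n} + d/n\big)$ on the extreme eigenvalues of a Gaussian sample covariance (see, e.g., \citet{wainwright2019high}), which is $O_P(\sqrt{d/n})$ in the regime $d = o(n)$, and absorb the correction term using $\|\widehat{\mu}-\mu\|_2^2 = O_P(d/n)$ from the previous step, which is of lower order. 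Adding the two contributions yields the second display.

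The only genuinely substantive step is the reduction in the first paragraph; once the mixture is replaced by the i.i.d.\ sample $\{z_i\}$, both bounds are off-the-shelf. The points needing (minor) care are the symmetrization argument for $y_i g_i \sim \mathcal{N}(0,I_d)$, the bookkeeping of the mean-correction term in $\widehat{\Sigma}$, and noting that the intended high-dimensional regime $d = o(n)$ is what makes $\sqrt{d/n}$ the dominant term in the Wishart estimate.
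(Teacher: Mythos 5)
Your plan is correct, and it fills in exactly the reduction the paper leaves implicit: the paper states Lemma~\ref{lem:mean_convergence} without proof, citing \citet{wainwright2019high}, and your observation that $z_i = y_i x_i \sim_{i.i.d.} \mathcal{N}(\mu,\Sigma)$ (with $x_i x_i^T = z_i z_i^T$) is precisely what reduces the mixture to the textbook i.i.d.\ Gaussian setting where the cited $\chi^2$ and Wishart operator-norm bounds apply. The handling of the mean-recentering term and the remark that $d/n$ is absorbed into $\sqrt{d/n}$ when $d \lesssim n$ are the right minor caveats; no gaps.
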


The following lemma about the classification error of linear classifiers will also be useful for us.

\begin{lemma} [Robust classification error of linear classifier, (see e.g. in \cite{NIPS2019_8968}, Appendix B.3)] \label{lem:error_linear}For a linear classifier $f_w(x) = \sgn(w^Tx)$, the robust classification error with a $B, \varepsilon$ adversary is 
	\begin{equation*}
	\RobustRisk(f_w) = \bar{\Phi}\left(\frac{w^T \mu - \varepsilon \|w\|_{B*}}{\|w\|_\Sigma}\right).
	\end{equation*}
	Here, $\|\cdot\|_{{B*}}$ is the dual norm of $\|\cdot\|_B$.  We use $\RobustRisk(w)$ as a shorthand for $\RobustRisk({f_{w}})$ when the meaning is clear from context.
\end{lemma}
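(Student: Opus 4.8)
The plan is to eliminate the inner maximization over adversarial perturbations in closed form, which collapses the robust $0$--$1$ loss of a linear classifier to a half-space indicator, and then to evaluate the resulting Gaussian tail probability by conditioning on the label.

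First I would fix a sample $(x,y)$ and rewrite $\ell_{B,\varepsilon}(f_w;x,y)$. Since $f_w(x+\delta)=\sgn\big(w^{T}(x+\delta)\big)$, there exists an admissible perturbation fooling $f_w$ if and only if $\min_{\|\delta\|_B\le\varepsilon} y\,w^{T}(x+\delta)\le 0$. Splitting $y\,w^{T}(x+\delta)=y\,w^{T}x+y\,w^{T}\delta$ and using that $B$ is closed, convex and origin-symmetric — so that $\|\cdot\|_B$ is a genuine norm whose dual norm $\|\cdot\|_{B*}$ satisfies $\min_{\|\delta\|_B\le\varepsilon} y\,w^{T}\delta=-\varepsilon\,\|yw\|_{B*}=-\varepsilon\,\|w\|_{B*}$ (here symmetry of the constraint set and $|y|=1$ remove the dependence on $y$) — one obtains
\[
\ell_{B,\varepsilon}(f_w;x,y)=\mathbb{I}\!\left(y\,w^{T}x-\varepsilon\,\|w\|_{B*}\le 0\right),
\]
so that $\RobustRisk(f_w)=\mathbb{P}\big(y\,w^{T}x\le\varepsilon\,\|w\|_{B*}\big)$.

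Next I would compute this probability by conditioning on $y$. Under $y=1$ we have $x\sim\mathcal{N}(\mu,\Sigma)$, hence $w^{T}x\sim\mathcal{N}\big(w^{T}\mu,\ \|w\|_\Sigma^{2}\big)$, so the conditional probability equals $\Phi\!\big((\varepsilon\|w\|_{B*}-w^{T}\mu)/\|w\|_\Sigma\big)=\bar\Phi\!\big((w^{T}\mu-\varepsilon\|w\|_{B*})/\|w\|_\Sigma\big)$; under $y=-1$ we have $x\sim\mathcal{N}(-\mu,\Sigma)$, so $-w^{T}x\sim\mathcal{N}\big(w^{T}\mu,\ \|w\|_\Sigma^{2}\big)$ and the event $\{-w^{T}x\le\varepsilon\|w\|_{B*}\}$ has the identical conditional probability. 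Averaging the two cases, each with weight $\tfrac12$, yields the claimed expression. The boundary event $\{y\,w^{T}x=\varepsilon\|w\|_{B*}\}$ has probability zero whenever $w\ne 0$ and $\Sigma$ is positive definite on $\mathrm{span}(w)$, so strict versus non-strict inequalities are immaterial. There is no serious obstacle here; the only delicate points are (i) the variational identity $\max_{\|\delta\|_B\le\varepsilon} w^{T}\delta=\varepsilon\|w\|_{B*}$ together with the symmetry argument that removes the label dependence, and (ii) the sign bookkeeping when symmetrizing over the two classes, with the degenerate case $w=0$ (equivalently $\|w\|_\Sigma=0$) handled by the usual conventions.
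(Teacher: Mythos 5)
Your proof is correct. The paper does not prove Lemma~\ref{lem:error_linear} itself but cites it from \cite{NIPS2019_8968}, Appendix B.3, and your argument is exactly the standard one used there: reduce the inner adversarial maximization to the dual-norm identity $\max_{\|\delta\|_B\le\varepsilon} v^T\delta=\varepsilon\|v\|_{B*}$ (using that $B$ is closed, convex and origin-symmetric so that $\|\cdot\|_{B*}$ is symmetric and the dependence on $y$ drops out), then evaluate the resulting half-space probability under each Gaussian component and average. Your handling of the measure-zero boundary event and the degenerate case $w=0$ is appropriate; no gaps.
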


\begin{proof}[Proof of Theorem \ref{thm:main_upper_bound}]
	By Lemma \ref{lem:error_linear} and Taylor expansion of $\bar{\Phi}(t)$ around $t = \half r =  \|w_0\|_\Sigma$, the excess risk can be written as:
	\begin{align*}
	\RobustRisk(\widehat{w}) - \RobustOptRisk &= \bar{\Phi} \left(\frac{\widehat{w}^T \mu -  \varepsilon \|\widehat{w}\|_{B*}}{\|\widehat{w}\|_\Sigma} \right) - \bar{\Phi}(\|w_0\|_\Sigma) \\
	&=\frac{1}{\sqrt{2\pi}}e^{-\frac{1}{8} r^2} \delta_n + O(\delta_n^2),
	\end{align*}
	where 
	\begin{equation*}
	\delta_n =\|w_0\|_\Sigma - \frac{\widehat{w}^T \mu -  \varepsilon \|\widehat{w}\|_{B*}}{\|\widehat{w}\|_\Sigma}.
	\end{equation*}
	
	Therefore, to analyze the convergence rate of the excess risk, we only need to analyze the convergence rate of $\delta_n$. We would like to prove that
	\begin{equation*}
	\delta_n  = O_P\left(r \cdot \frac{d}{n}\right).
	\end{equation*}
	The following lemma is the key of our analysis: it decomposes $\delta_n$ into four terms, each in the form of the difference between population and sample  quantities like $\Sigma$ vs $\widehat{\Sigma}$, $\widehat{\mu}$ vs $\mu$. 
	\begin{lemma}\label{lem:delta_n} We have the following decomposition for $\delta_n$:
		\begin{equation*}
		\|\widehat{w}\|_\Sigma \delta_n 
		= 
		\underbrace{-\half \left( \|w_0\|_\Sigma - \|\widehat{w}\|_\Sigma\right)^2 }_{\termone}
		\underbrace{+ w_0^T  (\Zemp  - \Zpop )}_{\termtwo} \nonumber\\
		\underbrace{- \half \| \Zemp - \Zpop \|_{\Sigma^{-1}}^2}_{\termthree} 
		\underbrace{+ \half \| (\Sigma - \widehat{\Sigma})\widehat{w}  + (\widehat{\mu} - \mu)\|_{\Sigma^{-1}}^2 }_{\termfour}.
		\end{equation*}
		where $\zhat$ is the shorthand for $\zhat = z_{\widehat{\Sigma}}(\widehat{\mu})$.
	\end{lemma}
	The proof of Lemma \ref{lem:delta_n} is provided in Appendix~\ref{Sec:Pflem:delta_n}. 
	Based on this decomposition, our goal is to establish the following relations.
	\begin{equation*}
	\termone \leq 0, ~\termtwo \leq 0, ~\termthree \leq 0, ~\termfour \leq O_P\left(r^2 \frac{d}{n}\right).
	\end{equation*}
	
	It is obvious that $\termone \leq0, \termthree \leq 0$. For the second term $\termtwo$,  consider $\phi(z) = \|\mu - z\|_{\Sigma^{-1}}^2$. Since $\Zpop = \argmin_{\|z\|_B \leq \varepsilon} \|\mu - z\|_{\Sigma^{-1}}^2 = \argmin_{\|z\|_B \leq \varepsilon} \phi(z)$ , by the first order optimality condition, 
	we have $(z’ - \Zpop)^T \grad \phi(\Zpop)\leq 0$ holds for any $\|z’\|_B \leq \varepsilon$. Choosing $z’ = \hat{z}$ gives:
	\begin{equation*}
	(\mu - \Zpop)^T \Sigma^{-1} (\Zemp -\Zpop) \leq0 \Leftrightarrow w_0^T (\Zemp-\Zpop) \leq 0.
	\end{equation*}
	Therefore, $\termtwo \leq 0$ as we desired. 
	
	The remaining work is to prove that $\termfour \leq O_P\left((1+r)^2 \frac{d}{n}\right)$. 
	By triangle's inequality, 
	\begin{equation*}
	\| (\Sigma - \widehat{\Sigma})\widehat{w}  + (\widehat{\mu} - \mu)\|_{\Sigma^{-1}} \leq \| (\Sigma - \widehat{\Sigma})\widehat{w} \|_{\Sigma^{-1}} + \|\widehat{\mu} - \mu\|_{\Sigma^{-1}}.
	\end{equation*}
	Both terms can be controled using covergence of sample mean and covariance. By Lemma \ref{lem:mean_convergence}, one has 
	\begin{equation*}
	\|\widehat{\mu} - \mu\|_{\Sigma^{-1}} \leq O_P\left(\sqrt{\frac{d}{n}} \right),
	\end{equation*}
	and direct calculations give 
	\begin{align*}
	\| (\Sigma - \widehat{\Sigma})\widehat{w} \|_{\Sigma^{-1}} & = \|(I - \Sigma^{-\half} \widehat{\Sigma} \Sigma^{-\half}) (\Sigma^{\half} \widehat{w})\|_2 \\
	& \leq \| I - \Sigma^{-\half} \widehat{\Sigma} \Sigma^{-\half}\|_{op} \|\Sigma^{\half} \widehat{w}\|_2 
	\\
	& = O_P\left(\sqrt{\frac{d}{n}}  \right) \|\widehat{w}\|_{\Sigma}. 
	\end{align*}	
	Combined pieces together, triangle's inequality further guarantees that 
	\begin{equation*}
	\| (\Sigma - \widehat{\Sigma})\widehat{w}  + (\widehat{\mu} - \mu)\|_{\Sigma^{-1}} \leq O_P\left(\sqrt{\frac{d}{n}} (\|\widehat{w}\|_{\Sigma}+1) \right) .
	\end{equation*}
	Since $\widehat{\mu} \rightarrow \mu, \widehat{\Sigma} \rightarrow \Sigma$, we have $\widehat{w} \rightarrow w_0$, therefore $\|\widehat{w}\|_{\Sigma} = (1+o(1))\|w_0\|_{\Sigma} = (\half+o(1))r$, hence,
	\begin{align*}
	\termfour & = \half \| (\Sigma - \widehat{\Sigma})\widehat{w}  + (\widehat{\mu} - \mu)\|_{\Sigma^{-1}}^2 \\
	& \leq \half \left( O_P(\sqrt{\frac{d}{n}}) (\|\widehat{w}\|_{\Sigma} + 1) \right)^2 \\
	& = O_P\left(r^2 \cdot \frac{d}{n}\right).
	\end{align*}
	Putting things together and recall that $r = \Omega(1)$, we have 
	\begin{equation*}
	\delta_n  = O_P\left(r \cdot \frac{d}{n}\right).
	\end{equation*}	
	Therefore we have completed the proof.  
\end{proof}
\subsection{Proof of Lemma~\ref{thm:minimax_reduction}}
To prove Lemma~\ref{thm:minimax_reduction}, we start with a simple observation: for any classifier $f$, its standard error on any perturbed distribution $P_{\mu', \Sigma}$ is always a lower bound on robust error of the original distribution $P_{\mu, \Sigma}$, as long as the perturbation has bounded $B$-norm $\|\mu' - \mu\|_B \leq \varepsilon$:

\begin{lemma}
	\label{lem:lb_robust_error}
	For any classifier $f: \R^d \rightarrow \{-1,+1\}$ and any $\mu' \in \R^d, \|\mu' - \mu\|_B \leq \varepsilon$
	\begin{equation*}
	\RobustRisk(f) \geq \rstd_{\mu', \Sigma}(f).
	\end{equation*}
\end{lemma}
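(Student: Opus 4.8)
The plan is to show that the robust risk of $f$ on $P_{\mu,\Sigma}$ dominates the standard risk of $f$ on the shifted model $P_{\mu',\Sigma}$ by exhibiting, for each point misclassified under $P_{\mu',\Sigma}$, a corresponding perturbation witnessing a misclassification under the robust loss on $P_{\mu,\Sigma}$. Concretely, I would work conditionally on the label $y$. Under $P_{\mu,\Sigma}$ a sample with label $y$ has the form $x = y\mu + g$ with $g \sim \mathcal N(0,\Sigma)$, while under $P_{\mu',\Sigma}$ a sample with label $y$ has the form $x' = y\mu' + g$ with the same noise law. The key observation is that $x' = x + y(\mu' - \mu)$, so $x'$ is a valid adversarial perturbation of $x$: since $\|\mu' - \mu\|_B \le \varepsilon$ and $B$ is origin-symmetric, $\|y(\mu'-\mu)\|_B = \|\mu'-\mu\|_B \le \varepsilon$. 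Hence if $f(x') \neq y$, then taking $\delta = y(\mu' - \mu)$ shows $\exists \delta$ with $\|\delta\|_B \le \varepsilon$ and $f(x+\delta) \neq y$, i.e. $\ell_{B,\varepsilon}(f;x,y) = 1$.

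From this pointwise domination I would conclude the inequality in expectation by a coupling argument. Fix the label $y$ (each occurs with probability $\tfrac12$) and couple the two conditional distributions through the shared Gaussian noise $g$: the map $g \mapsto (y\mu + g,\ y)$ pushes $\mathcal N(0,\Sigma)$ to $P_{\mu,\Sigma}(\cdot \mid y)$ and $g \mapsto (y\mu' + g,\ y)$ pushes it to $P_{\mu',\Sigma}(\cdot \mid y)$. Under this coupling, the previous paragraph gives $\mathbb I(f(y\mu' + g) \neq y) \le \ell_{B,\varepsilon}(f;\, y\mu + g,\, y)$ pointwise in $g$. Taking expectation over $g$ and then averaging over $y \in \{-1,+1\}$ with weights $\tfrac12$ yields
\begin{equation*}
\rstd_{\mu', \Sigma}(f) = \E_{y}\E_{g}\big[\mathbb I(f(y\mu'+g)\neq y)\big] \le \E_{y}\E_{g}\big[\ell_{B,\varepsilon}(f;\, y\mu+g,\, y)\big] = \RobustRisk(f),
\end{equation*}
which is exactly the claimed bound.

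There is no real analytic obstacle here; the statement is essentially a definitional manipulation once the right perturbation $\delta = y(\mu'-\mu)$ is identified. The only points that need care are: (i) checking that $\|\cdot\|_B$ is symmetric so that the sign $y$ does not spoil the norm bound — this follows from the assumed origin-symmetry of $B$ in the definition of the Minkowski functional; and (ii) being explicit that the adversary in the definition of $\ell_{B,\varepsilon}$ is allowed to pick $\delta$ as a function of $(x,y)$, so choosing $\delta$ to depend on $y$ is legitimate. I would also remark that the inequality is typically strict, since the adversary may additionally succeed on points that $f$ classifies correctly under $P_{\mu',\Sigma}$, but only the stated inequality is needed. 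This lemma then feeds into Corollary~\ref{corollary:excess_lb} by specializing $\mu' = \mu - z_\Sigma(\mu)$, for which $\|\mu' - \mu\|_B = \|z_\Sigma(\mu)\|_B \le \varepsilon$ by the feasibility constraint in \eqref{eqn:z0}.
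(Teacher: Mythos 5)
Your proposal is correct and matches the paper's own argument: the paper likewise conditions on the label, picks the perturbation $\delta = \mu'-\mu$ for the positive class and $\delta = \mu-\mu'$ for the negative class (your unified $\delta = y(\mu'-\mu)$), and concludes by the change of variables $x \mapsto x - y\mu + y\mu'$, which is exactly your coupling through the shared Gaussian noise. No substantive difference.
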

\begin{proof} 
	By the definition of robust classification error \eqref{def:robust_error}, we can decompose the error into two parts: the error on positive class ($y=1$) and negative class ($y = -1$), namely, 
	\begin{align}
	\nonumber	\RobustRisk(f) =& \E_{(x,y) \sim P_{\mu, \Sigma}} [\mathbb{I}\left( \exists \|\delta\|_B \leq \varepsilon, f(x+\delta) \neq y\right)] \\
	=&  \half \E_{x \sim N(\mu, \Sigma)} [\mathbb{I}\left( \exists \|\delta\|_B \leq \varepsilon, f(x+\delta) \neq 1\right)] +
	\half \E_{x \sim N(-\mu, \Sigma)} [\mathbb{I}\left( \exists \|\delta\|_B \leq \varepsilon, f(x+\delta) \neq -1\right)]. \label{eqn:robust_err_decomposition} 
	\end{align}
	By choosing the adversarial perurbation as $\delta = \mu' - \mu$, we have the error on positive class is lower bounded by:
	\begin{align}
	\label{eqn:lb_positive_class_error}
	\notag 	 \E_{x \sim N(\mu, \Sigma)} [ \mathbb{I}\left( \exists \|\delta\|_B \leq \varepsilon, f(x+\delta) \neq 1\right)]  
	\geq& \E_{x \sim N(\mu, \Sigma)} [ \mathbb{I}\left( f(x-\mu + \mu') \neq 1\right)]\\
	= & \E_{x' \sim N(\mu', \Sigma)} [ \mathbb{I}\left( f(x') \neq 1\right)].
	\end{align}
	Similarly, by choosing $\delta = \mu -\mu' $, we have the error on negative class is lower bounded by:
	\begin{equation}
	\label{eqn:lb_negtive_class_error}
	\notag  \E_{x \sim N(-\mu, \Sigma)} [ \mathbb{I}\left( \exists \|\delta\|_B \leq \varepsilon, f(x+\delta) \neq 1\right)]  \\
	\geq  \E_{x' \sim N(-\mu', \Sigma)} [ \mathbb{I}\left( f(x') \neq -1\right)].
	\end{equation}
	Hence, combining \eqref{eqn:robust_err_decomposition} , \eqref{eqn:lb_positive_class_error} and \eqref{eqn:lb_negtive_class_error}, we get
	\begin{align*}
	\RobustRisk(f) \geq &\half \E_{x' \sim N(\mu', \Sigma)} [\mathbb{I}\left( f(x') \neq 1\right)]+
	\half \E_{x' \sim N(-\mu', \Sigma)} [\mathbb{I}\left(f(x') \neq -1\right)]\\	
	=& \rstd_{\mu', \Sigma}(f),
	\end{align*}
	where the last step is by the definition of standard error \eqref{def:std_err}. Therefore we have completed the proof.
\end{proof}

Next, we show more connections between robust and standard classification. Namely, the robust Bayes classifier of $P_{\mu, \Sigma}$ coincides with the standard Bayes classifier of $P_{\mu- \Zpop, \Sigma}$, as stated in the following Lemma:
\begin{lemma}\label{lem:same_bayes_optimality}
	Let $\Zpop$ be the solution of \eqref{eqn:z0}, then the robust Bayes classifier of $P_{\mu, \Sigma}$,  $f_*(x) = \sgn (w_0^T x)$,   satisfies the following conditions:
	\begin{enumerate}
		\item $\RobustRisk(f_*) =  \rstd_{\mu-\Zpop, \Sigma}(f_*)$.
		\item $f_*$ is the standard Bayes Optimal Classifier of $P_{\mu - \Zpop, \Sigma}$.
	\end{enumerate}
\end{lemma}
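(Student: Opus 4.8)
The plan is to prove both items by direct computation, the engine being Lemma~\ref{lem:error_linear} together with the first-order optimality conditions for the convex program \eqref{eqn:z0}.

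For item~1 I would first apply Lemma~\ref{lem:error_linear} to the linear classifier $f_* = f_{w_0}$ to obtain $\RobustRisk(f_*) = \bar{\Phi}\big((w_0^T\mu - \varepsilon\|w_0\|_{B*})/\|w_0\|_\Sigma\big)$, and then apply the $\varepsilon=0$ instance of the same lemma to the model $P_{\mu-\Zpop,\Sigma}$ (equivalently, the elementary one-dimensional Gaussian computation) to get $\rstd_{\mu-\Zpop,\Sigma}(f_*) = \bar{\Phi}\big(w_0^T(\mu-\Zpop)/\|w_0\|_\Sigma\big)$. It then remains to match the two numerators, i.e.\ to show $w_0^T\Zpop = \varepsilon\|w_0\|_{B*}$. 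This is the one substantive step: since $\Zpop$ minimizes $\phi(z) = \|\mu-z\|_{\Sigma^{-1}}^2$ over $\{\|z\|_B\le\varepsilon\}$ and $\grad\phi(\Zpop) = -2w_0$, the first-order optimality inequality $(z'-\Zpop)^T\grad\phi(\Zpop)\ge 0$ valid for all feasible $z'$ becomes $w_0^T z' \le w_0^T\Zpop$ for every $\|z'\|_B\le\varepsilon$; taking the supremum over such $z'$ on the left gives $\varepsilon\|w_0\|_{B*}\le w_0^T\Zpop$, while $w_0^T\Zpop \le \|w_0\|_{B*}\|\Zpop\|_B \le \varepsilon\|w_0\|_{B*}$ supplies the reverse inequality. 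Hence the numerators coincide, and since $w_0^T(\mu-\Zpop) = (\mu-\Zpop)^T\Sigma^{-1}(\mu-\Zpop) = \|w_0\|_\Sigma^2$, both risks in fact equal $\bar{\Phi}(\|w_0\|_\Sigma) = \RobustOptRisk$, consistent with Theorem~\ref{thm:robust_bayes_optimality}.

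For item~2 I would recall that for the conditional Gaussian model $P_{\mu',\Sigma}$ the standard Bayes optimal classifier is the sign of the log-likelihood ratio, which simplifies to $x\mapsto\sgn\big((\Sigma^{-1}\mu')^Tx\big)$ (the Fisher discriminant rule, reducing to $\sgn((\mu')^Tx)$ when $\Sigma = I$ as in the Preliminaries). Specializing to $\mu' = \mu-\Zpop$ and using the definition $w_0 = \Sigma^{-1}(\mu-\Zpop)$ from \eqref{eqn:def_w_0}, this classifier is exactly $x\mapsto\sgn(w_0^Tx) = f_*(x)$, which is the claim. As a cross-check, item~1 already shows $f_*$ attains the standard Bayes risk $\bar{\Phi}(\|\mu-\Zpop\|_{\Sigma^{-1}})$ of $P_{\mu-\Zpop,\Sigma}$, and for a nondegenerate Gaussian mixture this pins down the Bayes classifier up to a null set.

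The whole argument is short, and the main obstacle is precisely the dual-norm identity $w_0^T\Zpop = \varepsilon\|w_0\|_{B*}$; everything else is bookkeeping with Lemma~\ref{lem:error_linear}. A minor point to keep in mind is that $\Zpop$ need not be the unique minimizer of \eqref{eqn:z0} when $\Sigma$ is degenerate or $B$ fails to be strictly convex, but the first-order inequality used above holds at any minimizer, so this does not affect the conclusion.
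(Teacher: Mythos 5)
Your proof is correct, and for item~2 it coincides with the paper's: both identify the standard Bayes rule of $P_{\mu-\Zpop,\Sigma}$ as $\sgn((\Sigma^{-1}(\mu-\Zpop))^Tx)=\sgn(w_0^Tx)$. For item~1, however, you take a genuinely different (and more self-contained) route. The paper's proof is a two-line specialization of Theorem~\ref{thm:robust_bayes_optimality}: applied at $\varepsilon=0$ to $P_{\mu-\Zpop,\Sigma}$ it yields the standard Bayes error $\bar{\Phi}(\|\mu-\Zpop\|_{\Sigma^{-1}})$, which is recognized as $\RobustOptRisk=\RobustRisk(f_*)$; the identity you need is thus absorbed into Theorem~\ref{thm:robust_bayes_optimality}, whose appendix proof obtains it (as Corollary~\ref{corollary:w_and_z}) via Von Neumann minimax duality for $\min_{\|z\|_B\le\varepsilon}\max_{\|w\|_\Sigma\le 1}w^T(\mu-z)$. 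You instead compute both risks explicitly from Lemma~\ref{lem:error_linear} and reduce the claim to the single identity $w_0^T\Zpop=\varepsilon\|w_0\|_{B*}$, which you prove from the first-order optimality condition of \eqref{eqn:z0} (noting $\grad\phi(\Zpop)=-2w_0$) in one direction and the dual-norm/H\"older inequality $w_0^T\Zpop\le\|w_0\|_{B*}\|\Zpop\|_B\le\varepsilon\|w_0\|_{B*}$ in the other. This amounts to an elementary KKT-based proof of Corollary~\ref{corollary:w_and_z} that bypasses the minimax theorem entirely; what it costs is a little extra computation, and what it buys is independence from the appendix machinery behind Theorem~\ref{thm:robust_bayes_optimality} (indeed your argument is the same first-order-optimality device the paper itself uses to bound $\termtwo$ in the proof of Theorem~\ref{thm:main_upper_bound}, where, incidentally, your sign convention $(z'-\Zpop)^T\grad\phi(\Zpop)\ge 0$ is the correct one). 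Your closing remarks on non-uniqueness of the minimizer and on the Bayes classifier being determined only up to a null set are accurate and harmless.
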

\begin{proof}
	Note that by setting $\varepsilon=0$ in Theorem \ref{thm:robust_bayes_optimality}, we get the characterization of the standard Bayes error and Bayes optimal classifier for conditional Gaussian models. Applying this result for the distribution $P_{\mu-\Zpop, \Sigma}$, we have 
	\begin{enumerate}
		\item The standard Bayes Optimal Classifier of $P_{\mu - \Zpop, \Sigma}$ is $\sgn((\mu-\Zpop)^T \Sigma^{-1} x)$, which is exactly $f_*(x)$.
		\item The standard Bayes error of of $P_{\mu - \Zpop, \Sigma}$ is $\bar{\Phi}(\sqrt{(\mu-\Zpop)^T \Sigma^{-1}(\mu-\Zpop)})$, which is exactly $\RobustOptRisk$.
	\end{enumerate}
	Hence we have completed the proof. 
\end{proof}
As a direct consequence of Lemma \ref{lem:lb_robust_error} and Lemma \ref{lem:same_bayes_optimality}, we have the robust excess risk under $P_{\mu, \Sigma}$ is lower bounded by the standard excess risk under $P_{\mu- \Zpop, \Sigma}$:
\begin{corollary}\label{corollary:excess_lb}
	For any classifier $f: \R^d \rightarrow \{-1,+1\}$, 
	\begin{align*}
	\RobustRisk(f) - \RobustOptRisk &\geq \rstd_{\mu-\Zpop, \Sigma}(f) - \rstd_{\mu-\Zpop, \Sigma}(f_*)\\ &= \rstd_{\mu-\Zpop, \Sigma}(f) - \rstd_{\mu-\Zpop, \Sigma}*,
	\end{align*}
	where
	\begin{equation*}
	\CleanOptRisk =  \inf_g  \rstd_{\mu', \Sigma}(g)
	\end{equation*}
	is the optimal standard risk.
\end{corollary}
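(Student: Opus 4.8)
The plan is to obtain the claimed inequality by chaining the two preceding lemmas, specialized to the perturbed mean $\mu' = \mu - \Zpop$. First I would check that this choice is admissible for Lemma~\ref{lem:lb_robust_error}: since $\Zpop$ is the minimizer of the program~\eqref{eqn:z0} over $\{z : \|z\|_B \leq \varepsilon\}$, it is in particular feasible, so $\|\mu' - \mu\|_B = \|\Zpop\|_B \leq \varepsilon$. With this verified, Lemma~\ref{lem:lb_robust_error} immediately gives, for every classifier $f$,
\[
\RobustRisk(f) \;\geq\; \rstd_{\mu-\Zpop,\Sigma}(f).
\]

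Next I would identify the optimal-risk term using Lemma~\ref{lem:same_bayes_optimality}. Its first conclusion says $\RobustOptRisk = \RobustRisk(f_*) = \rstd_{\mu-\Zpop,\Sigma}(f_*)$, and its second conclusion says that $f_*$ is the \emph{standard} Bayes-optimal classifier of $P_{\mu-\Zpop,\Sigma}$, so $\rstd_{\mu-\Zpop,\Sigma}(f_*) = \inf_g \rstd_{\mu-\Zpop,\Sigma}(g) = \rstd_{\mu-\Zpop,\Sigma}*$. Subtracting this chain of equalities from the displayed inequality yields
\[
\RobustRisk(f) - \RobustOptRisk \;\geq\; \rstd_{\mu-\Zpop,\Sigma}(f) - \rstd_{\mu-\Zpop,\Sigma}(f_*) \;=\; \rstd_{\mu-\Zpop,\Sigma}(f) - \rstd_{\mu-\Zpop,\Sigma}*,
\]
which is exactly the assertion of the corollary.

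I do not expect any genuine obstacle here: essentially all of the work has already been carried out in Lemmas~\ref{lem:lb_robust_error} and~\ref{lem:same_bayes_optimality}, and the corollary is just their composition. The only item demanding a moment's care is making sure both lemmas are invoked with the \emph{same} surrogate distribution $P_{\mu-\Zpop,\Sigma}$, together with the trivial feasibility observation $\|\Zpop\|_B \leq \varepsilon$ that licenses the use of Lemma~\ref{lem:lb_robust_error} for precisely this surrogate. (If one preferred not to cite Lemma~\ref{lem:same_bayes_optimality}, the identity $\RobustOptRisk = \rstd_{\mu-\Zpop,\Sigma}*$ could instead be re-derived on the spot from Theorem~\ref{thm:robust_bayes_optimality} applied with $\varepsilon = 0$ to the model $P_{\mu-\Zpop,\Sigma}$, but reusing the lemma keeps the argument shortest.)
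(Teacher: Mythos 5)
Your proposal is correct and follows exactly the route the paper intends: the corollary is stated there as a direct consequence of Lemma~\ref{lem:lb_robust_error} (applied with $\mu' = \mu - \Zpop$, which is feasible since $\|\Zpop\|_B \leq \varepsilon$) and Lemma~\ref{lem:same_bayes_optimality} (which identifies $\RobustOptRisk$ with the standard Bayes risk of $P_{\mu-\Zpop,\Sigma}$). Nothing is missing.
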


The last piece of tool needed for proving Lemma~\ref{thm:minimax_reduction} is a mapping from $ D_{\mathrm{std}}(r)$ to $D_{B, \varepsilon}(r)$ that keeps the excess risk non-decreasing. This is established via the following lemma:

\begin{lemma}
	\label{lem:inverse_prox_map}
	For any $(\mu', \Sigma) \in D_{\mathrm{std}}(r)$, there exists $(\mu, \Sigma) \in D_{B, \varepsilon}(r)$, such that $\mu - \Zpop  = \mu'$, here $\Zpop$ is the optimal solution of \eqref{eqn:z0}.
\end{lemma}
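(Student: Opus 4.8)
The plan is to construct $\mu$ explicitly by ``inverting'' the proximal-type map $\mu \mapsto \mu - \Zpop$ at the given point $\mu'$. The guiding observation is that $\Zpop$ is the minimizer of the convex function $\phi_\mu(\zeta) := \|\mu - \zeta\|_{\Sigma^{-1}}^2$ over the convex set $\{\zeta : \|\zeta\|_B \le \varepsilon\}$, with gradient $\nabla\phi_\mu(\zeta) = -2\Sigma^{-1}(\mu - \zeta)$. Hence, if we insist that $\mu - \Zpop = \mu'$, then the gradient of $\phi_\mu$ at its own minimizer must equal $-2\Sigma^{-1}\mu'$, a quantity depending only on the given data $(\mu',\Sigma)$ and not on the unknown $\mu$. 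This turns the first-order optimality condition characterizing $\Zpop$ into a condition involving $\mu'$ alone.

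Concretely, I would first let $z$ be a maximizer of the linear functional $\zeta \mapsto \inprod{\Sigma^{-1}\mu'}{\zeta}$ over the compact set $\{\|\zeta\|_B \le \varepsilon\}$; such a maximizer exists by compactness of the $B$-ball in finite dimensions, and the optimal value equals $\varepsilon\|\Sigma^{-1}\mu'\|_{B*}$. Then set $\mu := \mu' + z$. By construction $\|z\|_B \le \varepsilon$, so $z$ is feasible for the program \eqref{eqn:z0} with mean $\mu$. Next I would verify the variational inequality $\inprod{\nabla\phi_\mu(z)}{\zeta - z} \ge 0$ for every feasible $\zeta$: indeed $\nabla\phi_\mu(z) = -2\Sigma^{-1}(\mu - z) = -2\Sigma^{-1}\mu'$, and the inequality $\inprod{-2\Sigma^{-1}\mu'}{\zeta - z} \ge 0$ is precisely the optimality of $z$ for the linear maximization above. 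Since $\phi_\mu$ is convex (strictly, as $\Sigma^{-1} \succ 0$), this first-order condition is also sufficient for global optimality — exactly the reasoning already used for the term $\termtwo$ in the proof of Theorem~\ref{thm:main_upper_bound} — so $z = \Zpop$, and therefore $\mu - \Zpop = \mu - z = \mu'$.

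It then remains only to check $(\mu,\Sigma) \in D_{B,\varepsilon}(r)$, which is immediate from the identity just established: $\AdvSNR_{B,\varepsilon}(\mu,\Sigma) = 2\|\mu - \Zpop\|_{\Sigma^{-1}} = 2\|\mu'\|_{\Sigma^{-1}} = \StdSNR(\mu',\Sigma) = r$, using $(\mu',\Sigma) \in D_{\mathrm{std}}(r)$.

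I do not expect a serious obstacle here; the only points requiring care are (i) invoking convexity of $\phi_\mu$ to upgrade the necessary first-order condition to a sufficient one, and (ii) existence of the linear maximizer, together with the harmless observation that the degenerate case $\mu' = 0$ does not occur since $r$ is assumed bounded away from zero, forcing $\|\mu'\|_{\Sigma^{-1}} = r/2 > 0$. With Lemma~\ref{lem:inverse_prox_map} available, Lemma~\ref{thm:minimax_reduction} follows by combining it with Corollary~\ref{corollary:excess_lb}: for any estimator $\widehat f$ and any $(\mu',\Sigma) \in D_{\mathrm{std}}(r)$, choose the corresponding $(\mu,\Sigma) \in D_{B,\varepsilon}(r)$ so that $\mu - \Zpop = \mu'$, and Corollary~\ref{corollary:excess_lb} gives $\RobustRisk(\widehat f) - \RobustOptRisk \ge \rstd_{\mu',\Sigma}(\widehat f) - \CleanOptRisk$; taking the supremum over the respective classes and then the infimum over $\widehat f$ yields \eqref{eqn:minimax_reduction}.
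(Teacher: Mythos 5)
Your proposal is correct and follows essentially the same route as the paper: both construct $\mu = \mu' + z$ with $z$ the maximizer of the linear functional $\zeta \mapsto \mu'^T\Sigma^{-1}\zeta$ over the $B$-ball, and both verify $z = \Zpop$ via the first-order optimality condition for the (strongly) convex program \eqref{eqn:z0}, which reduces exactly to the linear optimality of $z$. Your explicit check that $\AdvSNR_{B,\varepsilon}(\mu,\Sigma)=r$ is a small completeness bonus the paper leaves implicit.
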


\begin{proof}
	The proof is constructive: we choose $\mu = \mu' + \widetilde{z}_{\Sigma}(\mu')$, where $\widetilde{z}_{\Sigma}(\mu')$ is the maximizer of the following convex program (which is maximizing a linear function over a convex set):
	\begin{align}\label{eqn:z1}
	\widetilde{z}_{\Sigma}(\mu') =  \argmax_{\|z\|_B \leq \varepsilon}\mu'^T \Sigma^{-1} z.
	\end{align}
	We want to prove that  $\mu - \Zpop = \mu'$. By our choice of $\mu$, we also have $\mu = \mu' + \widetilde{z}_{\Sigma}(\mu')$. Hence, we only need to prove that 
	\begin{equation*}
	\widetilde{z}_{\Sigma}(\mu') = \Zpop.
	\end{equation*}
	In other words, we only need to show that $\widetilde{z}_{\Sigma}(\mu')$ is the minimizer of \eqref{eqn:z0}. 
	
	Since \eqref{eqn:z0} is a convex program with a strongly convex objective, it suffices to prove the following first order optimality condition holds for any $\forall \|z'\|_B \leq \varepsilon$:
	\begin{equation*}
	(\mu - \widetilde{z}_{\Sigma}(\mu'))^T \Sigma^{-1} (z' -\widetilde{z}_{\Sigma}(\mu')) \leq0 .
	\end{equation*}
	Since $\mu - \widetilde{z}_{\Sigma}(\mu') = \mu'$, the inequality is equivalent to:
	\begin{equation*}
	\mu'^T \Sigma^{-1} z' \leq \mu'^T \Sigma^{-1} \widetilde{z}_{\Sigma}(\mu'),
	\end{equation*}
	which is correct by the definition of $\widetilde{z}_{\Sigma}(\mu')$. Hence we have completed the proof.	
\end{proof}

Equipped with Lemma \ref{lem:inverse_prox_map}, now we can prove the important lemma:
\begin{proof}[Proof of Lemma~\ref{thm:minimax_reduction}]
	By Lemma \ref{lem:inverse_prox_map}, for any $(\mu', \Sigma) \in D_{\mathrm{std}}(r)$, there exists $(\mu, \Sigma) \in D_{B, \varepsilon}(r)$, such that $\mu - \Zpop  = \mu'$, where $\Zpop$ is the optimal solution of \eqref{eqn:z0}. By Corollary \ref{corollary:excess_lb}, we have the following inequality holds for any fixed $\widehat{f}$:
	
	\begin{align*}
	\rstd_{\mu', \Sigma}(\widehat{f}) -\CleanOptRisk \leq  R_{\mu, \Sigma}(\widehat{f}) - \RobustOptRisk
	.	\end{align*}
	
	Therefore, 
	\begin{align*}
	\rstd_{\mu', \Sigma}(\widehat{f}) -\CleanOptRisk \leq \max_{(\mu, \Sigma) \in D_{B, \varepsilon}(r)}  [R_{\mu, \Sigma}(\widehat{f}) - \RobustOptRisk].
	\end{align*}
	holds for all $(\mu, \Sigma) \in D_{B, \varepsilon}(r)$, which means
	\begin{equation*}
	\max_{(\mu, \Sigma) \in D_{B, \varepsilon}(r)} [\RobustRisk(\widehat{f}) - \RobustOptRisk] \nonumber
	\geq \max_{(\mu', \Sigma) \in D_{\mathrm{std}}(r)} [\rstd_{\mu', \Sigma}(\widehat{f}) -\CleanOptRisk].
	\end{equation*}
	Then, taking minimum over $\widehat{f}$ on both sides proves the theorem.
	
\end{proof}


\section*{Acknowledgements}

Y.W. is supported in part by the NSF grant DMS-2015447 and CCF-2007911.  C.D. and P.R. are supported by DARPA via HR00112020006, and NSF via IIS1909816.

The authors would also like to thank Kaizheng Wang for many helpful discussions, Tianle Cai and Justin Khim for pointing us toward the work of \cite{NIPS2019_8968,tony2019high}, and annonymous reviewer for many suggestions about improving the presentation of the paper.

\bibliography{ref}
\bibliographystyle{apalike}
\nocite{}

\newpage
\appendix
\renewcommand{\prox}{\mathrm{prox}}

\section{Proof of Theorem \ref{thm:robust_bayes_optimality}}
For completeness, in this section, we present the proof of Theorem \ref{thm:robust_bayes_optimality}. This result follows from combining Theorem 1, Theorem 2 and Lemma 1 in \cite{NIPS2019_8968}. The proof is mainly a  simplified presentation of their proofs (e.g. without using the language of optimal transport) which make some of their results explicit to interpret for our case (e.g. they did not provide the expression for optimal linear classifier, which is useful to our algorithmic results). 

To start with, let us define $w_1 := \frac{w_0}{\|w_0\|_{\Sigma}} = \frac{\Sigma^{-1}(\mu - \Zpop)}{\|\mu - \Zpop\|_{\Sigma^{-1}}}$ be the normalized version of $w_0$ so that $\|w_1\|_\Sigma = 1$. The following lemma is implicit in \cite{NIPS2019_8968}:

\begin{lemma}
	Suppose we define  
	\begin{equation*}
	G(z,w)= w^T(\mu - z),
	\end{equation*} 	
	then $( \Zpop, w_1)$ is solution of the following minimax optimization problem:
	\begin{equation}\label{eqn:minimax}
	\min_{\|z\|_B \le \varepsilon} \max_{\|w\|_{\Sigma} \leq 1} G(z,w).
	\end{equation}
\end{lemma}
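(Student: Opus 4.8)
The plan is to evaluate the inner maximization of \eqref{eqn:minimax} in closed form, which collapses the problem to the very program \eqref{eqn:z0} defining $\Zpop$, and then to verify the saddle-point inequalities for the pair $(\Zpop, w_1)$. First I would fix $z$ and compute $\Psi(z) := \max_{\|w\|_\Sigma \le 1} w^T(\mu-z)$ via the substitution $u = \Sigma^{1/2}w$: the constraint becomes $\|u\|_2 \le 1$ and the objective $u^T\Sigma^{-1/2}(\mu-z)$, so Cauchy--Schwarz yields $\Psi(z) = \|\Sigma^{-1/2}(\mu-z)\|_2 = \|\mu-z\|_{\Sigma^{-1}}$, attained (whenever $\mu\neq z$) at $w^\star(z) = \Sigma^{-1}(\mu-z)/\|\mu-z\|_{\Sigma^{-1}}$. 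Hence \eqref{eqn:minimax} reduces to $\min_{\|z\|_B\le\varepsilon}\|\mu-z\|_{\Sigma^{-1}}$. Because $t\mapsto t^2$ is strictly increasing on $[0,\infty)$, this problem has exactly the same minimizer as \eqref{eqn:z0} --- unique, since $\Sigma \succ 0$ makes the squared objective strongly convex and $\varepsilon B$ is convex --- namely $\Zpop$. Evaluating $w^\star$ at $z = \Zpop$ and using $\|w_0\|_\Sigma = \|\mu-\Zpop\|_{\Sigma^{-1}}$ gives $w^\star(\Zpop) = \Sigma^{-1}(\mu-\Zpop)/\|\mu-\Zpop\|_{\Sigma^{-1}} = w_0/\|w_0\|_\Sigma = w_1$, the denominator being nonzero since $r = \AdvSNR_{B,\varepsilon}(\mu,\Sigma) > 0$ is assumed throughout (this also excludes the degenerate case $\mu = \Zpop$).

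At this point I would have shown that $\Zpop$ attains the outer minimum and that $w_1$ attains the inner maximum at $z = \Zpop$, and it remains to conclude that $(\Zpop, w_1)$ is a genuine saddle point of $G$ on $\{\|z\|_B\le\varepsilon\}\times\{\|w\|_\Sigma\le 1\}$, i.e. $G(\Zpop, w) \le G(\Zpop, w_1) \le G(z, w_1)$ for all feasible $z, w$. One route: $G(z,w) = w^T(\mu-z)$ is bilinear (in particular concave in $w$, convex in $z$) and both feasible sets are convex and compact, so a standard minimax theorem (e.g. Sion's) gives $\min_z\max_w G = \max_w\min_z G$, whence the two optimality facts above force the saddle inequalities. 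A self-contained route is to check the two inequalities directly: the left one is just the Step-1 computation specialized to $z = \Zpop$, and the right one reads $w_1^T(z-\Zpop)\le 0$ for all $\|z\|_B\le\varepsilon$, which (as $w_1$ is a positive multiple of $w_0$) is equivalent to $w_0^T(z-\Zpop)\le 0$ --- exactly the first-order optimality condition for $\Zpop$ in \eqref{eqn:z0}, since $\nabla_z\|\mu-z\|_{\Sigma^{-1}}^2\big|_{z=\Zpop} = -2\Sigma^{-1}(\mu-\Zpop) = -2w_0$.

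The computations are all elementary, so there is no serious obstacle; the points that genuinely need care are (i) the degenerate case $\mu = \Zpop$, ruled out by the standing assumption $r > 0$, and (ii) being explicit that passing from the squared objective in \eqref{eqn:z0} to the unsquared objective produced by the inner maximization does not change the minimizer. I would present the direct verification of the right saddle inequality via the first-order condition for \eqref{eqn:z0}, since that is the cleanest way to tie the solution back to $w_0$ and hence to $w_1$, mentioning the minimax-theorem argument only as an alternative.
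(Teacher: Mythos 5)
Your proposal is correct and follows essentially the same route as the paper: evaluate the inner maximum via Cauchy--Schwarz after the substitution $u=\Sigma^{1/2}w$, reduce the outer problem to \eqref{eqn:z0}, and read off $w_1$ from the maximizer at $z=\Zpop$. Your additional explicit verification of the saddle-point inequalities (via the first-order optimality condition for \eqref{eqn:z0}) is sound but goes beyond what the paper proves in this lemma --- the paper defers that to the subsequent corollary, where it invokes Von Neumann's minimax theorem instead.
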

\begin{proof}
	We first show that the optimal value of the inner maximization problem can be written as:
	\begin{equation}\label{eqn:inner_max}
	\max_{\|w\|_{\Sigma} \leq 1} w^T(\mu - z) = \|\mu - z\|_{\Sigma^{-1}},
	\end{equation}
	and the maximum is achieved when 
	\begin{equation}\label{eqn:inner_argmax}
	w = \frac{\Sigma^{-1}(\mu - z)}{\|\mu - z\|_{\Sigma^{-1}}}.
	\end{equation}
	In fact, for any $w$ such that $\|w\|_{\Sigma} \leq 1$, Cauchy-Schwarz inequality gives 
	\begin{align*}
	w^T(\mu - z)  = (\Sigma^{1/2}w)^T \Sigma^{-1/2}(\mu - z) 
	&\leq  \|\Sigma^{1/2}w\|_2 \|\Sigma^{-1/2}(\mu - z)\|_2 \\
	& = \|w\|_{\Sigma} \|\mu - z\|_{\Sigma^{-1}} \\
	& \leq \|\mu - z\|_{\Sigma^{-1}}.
	\end{align*}	
	Furthermore, it is easy to check that the choice $w = \frac{\Sigma^{-1}(\mu - z)}{\|\mu - z\|_{\Sigma^{-1}}}$ directly yields $w^T(\mu - z) = \|\mu - z\|_{\Sigma^{-1}}$ achieving the equality. Therefore we have proved \eqref{eqn:inner_max} and \eqref{eqn:inner_argmax}. 
	
	Using \eqref{eqn:inner_max}, the minimax problem~\eqref{eqn:minimax} therefore simplifies to: 
	\begin{equation*}
	\min_{\|z\|_B \le \varepsilon} \|\mu - z\|_{\Sigma^{-1}}.
	\end{equation*}
	Recall that we define $\Zpop$ (cf. \eqref{eqn:z0}) as 
	\begin{equation*}
	\Zpop = \argmin_{\|z\|_B \leq \varepsilon} \|\mu - z\|_{\Sigma^{-1}}^2,
	\end{equation*}
	which is the optimal solution to this outer minimization problem. Combining with the optimality condition for the inner maximization \eqref{eqn:inner_argmax}, we conclude that $( \Zpop, w_1)$ is solution of the minimax problem \eqref{eqn:minimax} and complete the proof.
\end{proof}

\begin{corollary}\label{corollary:w_and_z} 
	The following relation is satisfied for quantities $w_1$ and $\Zpop$:
	\begin{equation*}
	w_1^T \mu - \varepsilon \|w_1\|_{B*} = \|\mu - \Zpop\|_{\Sigma^{-1}}.
	\end{equation*}
\end{corollary}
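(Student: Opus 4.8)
\textbf{Proof proposal for Corollary~\ref{corollary:w_and_z}.}
The plan is to read this off as an immediate consequence of the saddle‑point structure established in the preceding lemma, by evaluating the payoff $G(\Zpop, w_1) = w_1^T(\mu - \Zpop)$ in two different ways. On the one hand, $w_1$ is the inner maximizer, so by \eqref{eqn:inner_max} we have $G(\Zpop, w_1) = \max_{\|w\|_\Sigma \le 1} w^T(\mu - \Zpop) = \|\mu - \Zpop\|_{\Sigma^{-1}}$. On the other hand, I want to show $\Zpop$ is the minimizer of $z \mapsto G(z, w_1) = w_1^T\mu - w_1^T z$ over $\{\|z\|_B \le \varepsilon\}$; since $\min_{\|z\|_B \le \varepsilon}\big(w_1^T\mu - w_1^T z\big) = w_1^T\mu - \max_{\|z\|_B \le \varepsilon} w_1^T z = w_1^T\mu - \varepsilon\|w_1\|_{B*}$ by definition of the dual norm, equating the two evaluations gives exactly $w_1^T\mu - \varepsilon\|w_1\|_{B*} = \|\mu - \Zpop\|_{\Sigma^{-1}}$.

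The step that needs justification is that $\Zpop$ minimizes $G(\cdot, w_1)$, equivalently that $w_1^T\Zpop = \varepsilon\|w_1\|_{B*}$, i.e.\ $\Zpop$ attains the dual norm against $w_1$. I would prove this directly from the first‑order optimality condition for \eqref{eqn:z0}: since $\Zpop = \argmin_{\|z\|_B \le \varepsilon}\|\mu - z\|_{\Sigma^{-1}}^2$ and the objective is convex, we have $(z - \Zpop)^T\Sigma^{-1}(\Zpop - \mu) \ge 0$ for all $\|z\|_B \le \varepsilon$, i.e.\ $z^T\Sigma^{-1}(\mu - \Zpop) \le \Zpop^T\Sigma^{-1}(\mu - \Zpop)$. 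Dividing by the positive scalar $\|\mu - \Zpop\|_{\Sigma^{-1}}$ and recalling $w_1 = \Sigma^{-1}(\mu - \Zpop)/\|\mu - \Zpop\|_{\Sigma^{-1}}$ yields $w_1^T z \le w_1^T\Zpop$ for all $\|z\|_B \le \varepsilon$, hence $w_1^T\Zpop = \max_{\|z\|_B\le\varepsilon} w_1^T z = \varepsilon\|w_1\|_{B*}$. Then $w_1^T\mu - \varepsilon\|w_1\|_{B*} = w_1^T(\mu - \Zpop) = \|\mu-\Zpop\|_{\Sigma^{-1}}\cdot\big(w_1^T\Sigma^{-1}(\mu-\Zpop)/\|\mu-\Zpop\|_{\Sigma^{-1}}\big) = \|\mu - \Zpop\|_{\Sigma^{-1}}$, as desired. (Alternatively, one may invoke Sion's minimax theorem to conclude $(\Zpop, w_1)$ is a genuine saddle point of the bilinear $G$ on the product of the two convex compact sets, which gives the minimizing property of $\Zpop$ for $G(\cdot, w_1)$ directly; the first‑order route above avoids even this.)

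I expect the only real obstacle to be this complementary‑slackness / dual‑norm‑attainment step, namely confirming $w_1^T\Zpop = \varepsilon\|w_1\|_{B*}$; once that identity is in hand the corollary follows by a one‑line substitution. Everything else—the evaluation of the inner maximum and the identification of $\min_z G(z,w_1)$ with $w_1^T\mu - \varepsilon\|w_1\|_{B*}$—is routine and already available from the proof of the preceding lemma and the definition of the dual norm.
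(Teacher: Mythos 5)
Your proof is correct, and it reaches the same two-way evaluation of $G(\Zpop,w_1)$ as the paper, but it justifies the crucial step by a different mechanism. The paper invokes Von Neumann's minimax theorem to get strong duality for the bilinear game \eqref{eqn:minimax}, deduces that $(\Zpop,w_1)$ is a saddle point, and reads off $\Zpop=\argmin_{\|z\|_B\le\varepsilon}G(z,w_1)$ from the stationarity of that saddle point. You instead derive the complementary-slackness identity $w_1^T\Zpop=\varepsilon\|w_1\|_{B*}$ directly from the first-order optimality (variational inequality) condition of the convex program \eqref{eqn:z0}, namely $(z-\Zpop)^T\Sigma^{-1}(\mu-\Zpop)\le 0$ for all feasible $z$, and then normalize by $\|\mu-\Zpop\|_{\Sigma^{-1}}$. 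Your route is more elementary and self-contained: it avoids any appeal to minimax duality and the slightly delicate inference from ``optimal for both orderings'' to ``saddle point,'' and it is in fact the same first-order-optimality trick the paper itself uses later (for the term $\termtwo$ in the proof of Theorem~\ref{thm:main_upper_bound} and in Lemma~\ref{lem:inverse_prox_map}). What the paper's duality route buys is the full saddle-point structure of \eqref{eqn:minimax}, which is conceptually aligned with the optimal-transport viewpoint of \cite{NIPS2019_8968}; what your route buys is brevity and fewer external theorems. The only point to flag explicitly is the division by $\|\mu-\Zpop\|_{\Sigma^{-1}}$, which requires $\mu\neq\Zpop$, i.e.\ $r>0$ --- but this is exactly the standing assumption of the paper (and $w_1$ is undefined otherwise), so your argument is complete as stated.
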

\begin{proof}
	Since $G(z,w)$ is linear in both $z$ and $w$ and both constraint sets $\{ \|z\|_B \le \varepsilon \}$ and $\{ \|w\|_{\Sigma} \leq 1\}$ are convex, the minimax problem \eqref{eqn:minimax} satisfies strong duality by Von Neumann's Minimax Theorem. 
	In other words, we can switch the order of the min and max, namely, 
	\begin{equation*}
	\min_{\|z\|_B \le \varepsilon} \max_{\|w\|_{\Sigma} \leq 1} G(z,w) = \max_{\|w\|_{\Sigma} \leq 1}  \min_{\|z\|_B \le \varepsilon} G(z,w), 
	\end{equation*}
	and $(\Zpop, w_1)$ is the solution to both sides. By the stationary condition of the minimax problem, 
	\begin{equation*}
	\Zpop = \argmin_{\|z\|_B \leq \varepsilon} G(z,w_1).
	\end{equation*}
	By the definition of dual norm, we also have
	\begin{equation*}
	\min_{\|z\|_B \leq \varepsilon} G(z,w_1) = \min_{\|z\|_B \leq \varepsilon} w_1^T(\mu - z) = w_1^T \mu - \varepsilon \|w_1\|_{B*}.
	\end{equation*}
	Hence, 
	\begin{equation*}
	\|\mu - \Zpop\|_{\Sigma^{-1}} = G(\Zpop,w_1) = \min_{\|z\|_B \leq \varepsilon} G(z,w_1) =  w_1^T \mu - \varepsilon \|w_1\|_{B*}.
	\end{equation*}
	Thus we completed the proof.
\end{proof}
Now we are ready to prove Theorem \ref{thm:robust_bayes_optimality}.
\begin{proof}[Proof of Theorem \ref{thm:robust_bayes_optimality}]
	The proof can be divided into two parts: 
	\begin{enumerate}
		\item Show that $f_{w_0}$ has robust risk $\RobustRisk(f_{w_0}) = \bar{\Phi}(\|\mu - \Zpop\|_{\Sigma^{-1}})$.
		\item Show that no classifier can achieve robust risk smaller than $\bar{\Phi}(\|\mu - \Zpop\|_{\Sigma^{-1}})$.
	\end{enumerate}
	The first part is a consequence of Corollary \ref{corollary:w_and_z}. In order to see this, we first note that since $w_1$ is a rescaling of $w_0$, the induced linear classifiers are the same, hence,
	\begin{equation*}
	\RobustRisk(f_{w_0}) = \RobustRisk(f_{w_1}) .
	\end{equation*}
	By Lemma \ref{lem:error_linear}, the robust risk of $f_{w_1}$ is
	\begin{equation*}
	\RobustRisk(f_{w_1}) =\bar{\Phi} (\frac{w_1^T \mu - \varepsilon \|w_1\|_{B*}}{\|w_1\|_{\Sigma}}) = \bar{\Phi} (w_1^T \mu - \varepsilon \|w_1\|_{B*}).
	\end{equation*}
	By Corollary \ref{corollary:w_and_z}, 
	\begin{equation*}
	\bar{\Phi} (w_1^T \mu - \varepsilon \|w_1\|_{B*}) =\bar{\Phi}(\|\mu - \Zpop\|_{\Sigma^{-1}}).
	\end{equation*}
	Therefore, we have proved the first part. 
	
	For the second part, we invoke Lemma \ref{lem:lb_robust_error}. By setting $\mu' = \mu - \Zpop$ in Lemma \ref{lem:lb_robust_error}, we have that for any classifier $f$, 
	\begin{equation*}
	\RobustRisk(f) \geq \rstd_{\mu - \Zpop, \Sigma}(f).
	\end{equation*}
	We also know that no classifier can achieve standard risk smaller than the Bayes Risk in $P_{\mu - \Zpop, \Sigma}$. Recall that for a conditional Gaussian kmodel $P_{\mu', \Sigma}$, the standard Bayes Risk is $\bar{\Phi}(\|\mu'\|_{\Sigma^{-1}})$. In other words, for any classifier $f$, we have
	\begin{equation*}
	\rstd_{\mu - \Zpop, \Sigma}(f) \geq \bar{\Phi}(\|\mu - \Zpop\|_{\Sigma^{-1}}).
	\end{equation*}
	Combining the two inequalities, we conclude that 
	\begin{equation}
	\RobustRisk(f) \geq \bar{\Phi}(\|\mu - \Zpop\|_{\Sigma^{-1}})
	\end{equation}
	holds for all classifiers $f$. Therefore, we prove the second part and thus complete the proof.
\end{proof}

\section{Proof of Proposition~\ref{prof_consequence}}
\begin{proof}[Proof of Proposition~\ref{prof_consequence}]
	Recall that the setting of interest here is $\Sigma = I$ and $\|\cdot\|_B$ corresponds to the $\ell_2$ norm. 
	In this setting, we show that $z_{\Sigma}(\mu)$ has a simplified form. 
	In fact, directly invoking 
	\begin{equation*}
	\Zpop = \argmin_{\|z\|_B \leq \varepsilon} \|\mu - z\|_{\Sigma^{-1}}^2 = \argmin_{\|z\|_2 \leq \varepsilon} \|\mu - z\|_{2}^2,
	\end{equation*}
	gives $z_{\Sigma}(\mu) = \min(\varepsilon, \|\mu\|_2) \frac{\mu}{\|\mu\|_2}$, and 
	\begin{equation*}
	\mu - z_{\Sigma}(\mu) = \max(0, \frac{\|\mu\|_2 - \varepsilon}{\|\mu\|_2}) \mu.
	\end{equation*}
	From this expression, we can see that when $\varepsilon > \|\mu\|_2$, the Adversarial Signal-to-Noise Ratio of $P_{\mu, \Sigma}$ is $2\|\mu - z_{\Sigma}(\mu)\|_2 = 0$. Hence, no classifier can achieve accuracy better than $\half$. Below we only consider the case when $\varepsilon < \|\mu\|_2$.
	
	Recall that we want to compare the minimax rate in adversarial and standard setting. As we showed earlier, the minimax rates are $O(\exp(-\half \|\mu - z_{\Sigma}(\mu)\|_2^2)\frac{d}{n})$ and $O(\exp(-\half \|\mu\|_2^2)\frac{d}{n})$ respectively. The ratio between the two quantities equals to:
	\begin{equation}
	\frac{\exp(-\half \|\mu - z_{\Sigma}(\mu)\|_2^2)\frac{d}{n}}{\exp(-\half \|\mu - z_{\Sigma}(\mu)\|_2^2)\frac{d}{n}} = \exp(\half((\|\mu\|_2 - \varepsilon)^2 - \|\mu\|_2^2)) = \exp(\varepsilon \|\mu\|_2 - \half \varepsilon^2).
	\end{equation}
	Since $0 \leq \varepsilon < \|\mu\|_2$, we have 
	\begin{equation*}
	\varepsilon \|\mu\|_2 - \half \varepsilon^2 = \varepsilon(\|\mu\|_2 - \half \varepsilon) 
	\in \left[ \half\varepsilon \|\mu\|_2, \varepsilon \|\mu\|_2 \right].
	\end{equation*}

	Equipped with the above relation, we are in the position of establishing Proposition~\ref{prof_consequence}.
	\begin{itemize}
		\item When $\varepsilon \leq O(\frac{1}{\|\mu\|_2})$, one has 
		\begin{equation*}
		\varepsilon \|\mu\|_2 - \half \varepsilon^2 \leq \varepsilon \|\mu\|_2 \leq O(1),
		\end{equation*}
		thereby, the adversarial rate is at most $\exp(O(1)) = O(1)$ times slower than the standard rate.
		
		\item When $\|\mu\|_2 \geq \Omega(\log d)$ and $\varepsilon \geq \Omega(\frac{\log d}{\|\mu\|_2})$, we conclude 
		\begin{equation*}
		\varepsilon \|\mu\|_2 - \half \varepsilon^2 \geq \half \varepsilon \|\mu\|_2 \geq \Omega(\log d),
		\end{equation*}
		the adversarial rate can be slower than the standard rate by an $\Omega(\exp (\log d)) = \Omega(poly(d))$ factor. 
		
		\item  When $\|\mu\|_2 \geq \Omega(\sqrt{d})$ and $\varepsilon \geq \Omega(\frac{d}{\|\mu\|_2})$, it is guaranteed that 
		\begin{equation*}
		\varepsilon \|\mu\|_2 - \half \varepsilon^2 \geq \half \varepsilon \|\mu\|_2 \geq \Omega(d),
		\end{equation*}
		therefore, the adversarial rate can be slower than the standard rate by an $\Omega(\exp (d))$ factor.
	\end{itemize} 
\end{proof}

\addtocounter{algorithm}{1}

\section{Improved analysis when $\Sigma$ is known}

Meticulous readers may find a tiny gap between our bounds: the upper bound in Theorem \ref{thm:main_upper_bound} is $O_P\left(e^{-\frac{1}{8} r^2}\cdot r \cdot \frac{d}{n} \right)$, while the lower bound above gives $\Omega_P\left(e^{-\frac{1}{8} r^2}\cdot \frac{1}{r} \cdot \frac{d}{n} \right)$. Since the dominant factor is $ e^{-\frac{1}{8} r^2} $ and $r = \Omega(1)$, this difference is only in a lower order term. This gap is due to the fact that \cite{li2017minimax} assumed the covariance matrix $\Sigma$ is known to the learner. In this section, we will prove that under the same assumption, there is a modified version of Algorithm \ref{alg:estimator} that achieves the truly optimal rate which matches the lower bound even with lower order term in $r$. 

The only modification we made in Algorithm \ref{alg:estimator} is to replace the sample covariance matrix by the true covariance $\Sigma$. The modified algorithm is presented below in Algorithm \ref{alg:estimator_modified}. 

\begin{algorithm}[H]
	\caption{An improved estimator for $w_0$ when $\Sigma$ is known}
	\label{alg:estimator_modified}
	\begin{algorithmic}
		\STATE {\bfseries Input:} Data pairs $\{(x_i, y_i)\}_{i=1}^n$.
		\STATE {\bfseries Output:} $\widehat{w}$.
		\STATE {\bfseries Step 1:} Define $\widehat{\mu}$ and $\widehat{\Sigma}$ as
		\begin{align*}
		\widehat{\mu} &\defn \frac{1}{n} \sum_{i=1}^n y_i x_i, 
		\qquad 
		\widehat{\Sigma} \defn \Sigma.
		\end{align*}
		\STATE {\bfseries Step 2:} Solve for $\Zemp$ in the following  
		\begin{equation*}
		\Zemp \defn z_{\widehat{\Sigma}}(\widehat{\mu}) 
		= \argmin_{\|z\|_B \leq \varepsilon} \|\widehat{\mu} - z\|_{\widehat{\Sigma}^{-1}}^2.
		\end{equation*}
		\STATE {\bfseries Step 3:} Define $\widehat{w} \defn \widehat{\Sigma}^{-1} (\widehat{\mu} -\Zemp)$.
	\end{algorithmic}
\end{algorithm}

\begin{theorem}\label{thm:modified_upper_bound}
	For the $(\|\cdot\|_B, \varepsilon)$ adversary, suppose the adversarial signal-to-noise ratio \\ $\AdvSNR_{B, \varepsilon}(\mu, \Sigma) = r$, then the excess risk of $f_{\widehat{w}}$ defined in Algorithm \ref{alg:estimator_modified} is upper bounded by
	\begin{equation*}
	\RobustRisk(f_{\widehat{w}}) - \RobustOptRisk \leq O_P\left(e^{-\frac{1}{8} r^2} \cdot \frac{1}{r} \cdot \frac{d}{n}\right).
	\end{equation*}
\end{theorem}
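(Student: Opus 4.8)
\textbf{Proof proposal for Theorem~\ref{thm:modified_upper_bound}.}

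The plan is to reuse the entire machinery from the proof of Theorem~\ref{thm:main_upper_bound}, specialized to the case $\widehat{\Sigma} = \Sigma$, and observe that the error term $\termfour$ collapses to something of strictly smaller order. Concretely, I would start from Lemma~\ref{lem:error_linear} and the Taylor expansion of $\bar\Phi$ around $\tfrac12 r = \|w_0\|_\Sigma$, exactly as before, to write
\begin{equation*}
\RobustRisk(f_{\widehat w}) - \RobustOptRisk = \tfrac{1}{\sqrt{2\pi}} e^{-\frac18 r^2}\,\delta_n + O(\delta_n^2),
\qquad
\delta_n = \|w_0\|_\Sigma - \frac{\widehat w^T \mu - \varepsilon\|\widehat w\|_{B*}}{\|\widehat w\|_\Sigma},
\end{equation*}
so that it again suffices to bound $\delta_n$. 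I would then invoke Lemma~\ref{lem:delta_n} verbatim; note that its proof makes no use of how $\widehat\Sigma$ is obtained, so the decomposition $\|\widehat w\|_\Sigma \delta_n = \termone + \termtwo + \termthree + \termfour$ holds with the same four terms. As in the original argument, $\termone \le 0$ and $\termthree \le 0$ trivially, and $\termtwo \le 0$ by the first-order optimality condition for $\Zpop$ in \eqref{eqn:z0} evaluated at $z' = \widehat z$ (this uses only that $\widehat z$ is feasible, i.e. $\|\widehat z\|_B \le \varepsilon$, which still holds).

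The only place where the improvement enters is $\termfour = \tfrac12 \|(\Sigma - \widehat\Sigma)\widehat w + (\widehat\mu - \mu)\|_{\Sigma^{-1}}^2$. With $\widehat\Sigma = \Sigma$ the first summand vanishes identically, so $\termfour = \tfrac12 \|\widehat\mu - \mu\|_{\Sigma^{-1}}^2 = O_P(d/n)$ by the mean-convergence bound in Lemma~\ref{lem:mean_convergence} — there is no longer a multiplicative factor of $(\|\widehat w\|_\Sigma + 1)^2 = \Theta(r^2)$. Combining, $\|\widehat w\|_\Sigma \delta_n \le O_P(d/n)$, and since $\widehat\mu \to \mu$ forces $\widehat w \to w_0$ and hence $\|\widehat w\|_\Sigma = (\tfrac12 + o(1)) r = \Omega(1) \cdot r$, dividing through gives $\delta_n \le O_P\big(\tfrac1r \cdot \tfrac{d}{n}\big)$. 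I should also check the remainder $O(\delta_n^2)$ is negligible: $\delta_n^2 = O_P(r^{-2} (d/n)^2) = o_P(r^{-1} d/n)$ since $d/n \to 0$ and $r = \Omega(1)$, so it is absorbed. Plugging back into the Taylor expansion yields the claimed bound $O_P\big(e^{-\frac18 r^2} \cdot \tfrac1r \cdot \tfrac{d}{n}\big)$.

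There is no serious obstacle here: the theorem is essentially a corollary of the proof of Theorem~\ref{thm:main_upper_bound} once one notices the source of the extra $r$ factor. The only point requiring mild care is confirming that each ingredient — Lemma~\ref{lem:delta_n}, the sign analysis of $\termone,\termtwo,\termthree$, and the lower bound $\|\widehat w\|_\Sigma = \Omega(r)$ with high probability — genuinely does not depend on using the \emph{sample} covariance, so that substituting $\widehat\Sigma = \Sigma$ is harmless everywhere except in $\termfour$. I would present the argument as a short proof that reproduces the Taylor-expansion step, cites Lemma~\ref{lem:delta_n}, disposes of $\termone,\termtwo,\termthree$ by reference to the earlier proof, and then carries out the one-line computation of $\termfour$ in the $\widehat\Sigma = \Sigma$ case.
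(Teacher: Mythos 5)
Your proposal is correct and follows essentially the same route as the paper: it reuses the Taylor expansion and the decomposition of Lemma~\ref{lem:delta_n}, keeps the sign analysis of $\termone,\termtwo,\termthree$ unchanged, and observes that with $\widehat\Sigma=\Sigma$ the term $\termfour$ reduces to $\tfrac12\|\widehat\mu-\mu\|_{\Sigma^{-1}}^2=O_P(d/n)$, after which dividing by $\|\widehat w\|_\Sigma=\Omega(r)$ gives $\delta_n=O_P\bigl(\tfrac1r\cdot\tfrac{d}{n}\bigr)$. Your explicit check that the $O(\delta_n^2)$ remainder is negligible is a detail the paper leaves implicit, but otherwise the two arguments coincide.
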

This improved rate can be proved by some simple modification to the proof of Theorem \ref{thm:main_upper_bound}.

\begin{proof} 
	
	We demonstrate that in this setting, there is a stronger upper bound $\delta_n  = O_P\left(\frac{1}{r} \cdot \frac{d}{n}\right)$ and the rest of proof follows the same as that of Theorem \ref{thm:main_upper_bound}.
	To this end, let us recall that by Lemma \ref{lem:delta_n} and one has the decomposition, 
	\begin{align*}
	\|\widehat{w}\|_\Sigma \delta_n 
	&= 
	\underbrace{-\half \left( \|w_0\|_\Sigma - \|\widehat{w}\|_\Sigma\right)^2 }_{\termone}
	\underbrace{+ w_0^T  (\Zemp  - \Zpop )}_{\termtwo} \underbrace{- \half \| \Zemp - \Zpop \|_{\Sigma^{-1}}^2}_{\termthree} 
	\underbrace{+ \half \| (\Sigma - \widehat{\Sigma})\widehat{w}  + (\widehat{\mu} - \mu)\|_{\Sigma^{-1}}^2 }_{\termfour}.
	\end{align*}
	Similar to the proof of Theorem \ref{thm:main_upper_bound}, we shall establish that
	\begin{equation*}
	\termone \leq 0, ~\termtwo \leq 0, ~\termthree \leq 0, ~\termfour \leq O_P\left( \frac{d}{n}\right).
	\end{equation*}
	Note that the only difference here is that we can now give a tighter upper bound for $\termfour$: $ O_P\left( \frac{d}{n}\right)$ instead of $ O_P\left(r^2 \frac{d}{n}\right)$. 
	
	Since $\Sigma = \hat{\Sigma}$, by Lemma \ref{lem:mean_convergence}, we have 
	\begin{equation}
	\termfour = \half \| (\Sigma - \widehat{\Sigma})\widehat{w}  + (\widehat{\mu} - \mu)\|_{\Sigma^{-1}}^2 = \half \|(\widehat{\mu} - \mu)\|_{\Sigma^{-1}}^2 = O_P\left( \frac{d}{n}\right).
	\end{equation}
	Hence, we have proved that $T_4 = O_P\left( \frac{d}{n} \right) $, and  
	\begin{equation*}
	\delta_n  = O_P\left(\frac{1}{r} \cdot \frac{d}{n}\right).
	\end{equation*}	
	Therefore we have completed the proof.  
\end{proof}

\section{Proof of Lemma \ref{lem:delta_n}}

\label{Sec:Pflem:delta_n}

\begin{proof}[Proof of Lemma \ref{lem:delta_n}] 
	
	Recall that our goal is to establish
	\begin{align}
	\label{eqn:defT}
	\notag \|\widehat{w}\|_\Sigma \delta_n & = \|\widehat{w}\|_\Sigma \|w_0\|_\Sigma - \left(\widehat{w}^T \mu -  \varepsilon \|\widehat{w}\|_{B*} \right) \\
	&= 
	\underbrace{-\half \left( \|w_0\|_\Sigma - \|\widehat{w}\|_\Sigma\right)^2 }_{\termone}
	\underbrace{+ w_0^T  (\Zemp  - \Zpop )}_{\termtwo}
	\underbrace{- \half \| \Zemp - \Zpop \|_{\Sigma^{-1}}^2}_{\termthree} 
	\underbrace{+ \half \| (\Sigma - \widehat{\Sigma})\widehat{w}  + (\widehat{\mu} - \mu)\|_{\Sigma^{-1}}^2 }_{\termfour}.
	\end{align}
	
	Since $\widehat{w} = \widehat{\Sigma}^{-1}(\widehat{\mu} -z_{\widehat{\Sigma}}(\widehat{\mu}))$, by  Theorem \ref{thm:robust_bayes_optimality}, $f_{\widehat{w}}$ is the optimal robust classifier for $P_{\widehat{\mu}, \widehat{\Sigma}}$, therefore, one can observe 
	\begin{equation*}
	\frac{\widehat{w}^T \widehat{\mu} - \varepsilon \|\widehat{w}\|_{B*}}{\|\widehat{w}\|_{\widehat{\Sigma}}} = \|\widehat{w}\|_{\widehat{\Sigma}}.
	\end{equation*}
	Hence, direct calculations yield 
	\begin{align*}
	\|\widehat{w}\|_{\Sigma}\delta_n &= \|w_0\|_{\Sigma} \|\widehat{w}\|_{\Sigma} - \|\widehat{w}\|_{\widehat{\Sigma}}^2 -  \widehat{w}^T (\mu - \widehat{\mu}) \\
	&= \|w_0\|_{\Sigma} \|\widehat{w}\|_{\Sigma} - (\widehat{\mu}-\Zemp)^T \widehat{\Sigma}^{-1} (\widehat{\mu} - \Zemp) + (\widehat{\mu} - \Zemp)^T \widehat{\Sigma}^{-1} ( \widehat{\mu} - \mu) \\
	&= \|w_0\|_{\Sigma} \|\widehat{w}\|_{\Sigma} + \widehat{w}^T (\Zemp - \mu ).
	\end{align*} 
	Now by use of the relation $\mu = \Sigma w_0 + \Zpop$, 
	we can further obtain 
	\begin{align*}	
	\|\widehat{w}\|_{\Sigma}\delta_n&= \|w_0\|_{\Sigma} \|\widehat{w}\|_{\Sigma} + \widehat{w}^T (\Zemp - \Sigma w_0 - \Zpop ) \\
	&= \|w_0\|_{\Sigma} \|\widehat{w}\|_{\Sigma} - \widehat{w}^T \Sigma w_0 +  \widehat{w}^T (\Zemp  - \Zpop ) \\
	&=   -\half \left( \|w_0\|_\Sigma - \|\widehat{w}\|_\Sigma\right)^2 + \half 	\|w_0\|_{\Sigma}^2 + \half  \|\widehat{w}\|_{\Sigma}^2 - \widehat{w}^T \Sigma w_0 +  \widehat{w}^T (\Zemp  - \Zpop ) \\
	&= \termone + \half (\widehat{w}-w_0)^T \Sigma (\widehat{w}-w_0) + w_0^T (\Zemp-\Zpop)  + (\widehat{w} - w_0)^T (\Zemp  - \Zpop ) \\
	& = \termone + \half (\widehat{w}-w_0)^T \Sigma (\widehat{w}-w_0) + \termtwo  + (\widehat{w} - w_0)^T (\Zemp  - \Zpop ),
	\end{align*}
	where the last equality invokes the definitions in expression~\eqref{eqn:defT}.
	To finish the proof, we make the observation about $\Sigma (\widehat{w}-w_0)$ in the following
	\begin{align*}
	\Sigma (\widehat{w}-w_0) &= (\Sigma - \widehat{\Sigma})\widehat{w} + (\widehat{\Sigma}\widehat{w} - \Sigma w_0) \\
	&= \underbrace{(\Sigma - \widehat{\Sigma})\widehat{w}}_{\termfive} + \underbrace{(\widehat{\mu} - \mu)}_{\termsix} - \underbrace{(\Zemp - \Zpop)}_{\termseven}
	:= \termfive+\termsix-\termseven.
	\end{align*}
	Therefore, putting everything together and rearranging terms, it is guaranteed that  
	\begin{align*}
	\|\widehat{w}\|_{\Sigma}\delta_n  & =   \termone + \termtwo +  \half (\widehat{w}-w_0)^T \Sigma (\widehat{w}-w_0) + (\widehat{w} - w_0)^T (\Zemp  - \Zpop ) \\
	&=  \termone + \termtwo + \half (\Sigma (\widehat{w}-w_0))^T \Sigma^{-1} (\Sigma (\widehat{w}-w_0)) + (\Sigma (\widehat{w}-w_0))^T \Sigma^{-1}(\Zemp-\Zpop) \\
	&=  \termone + \termtwo + \half (\termfive+\termsix-\termseven)^T \Sigma^{-1} (\termfive+\termsix-\termseven) + (\termfive+\termsix-\termseven) \Sigma^{-1} \termseven \\
	&=  \termone + \termtwo + \half (\termfive+\termsix-\termseven)^T \Sigma^{-1} (\termfive+\termsix+\termseven) \\
	&=  \termone + \termtwo - \half \termseven^T \Sigma^{-1} \termseven + \half (\termfive+\termsix)^T \Sigma^{-1} (\termfive+\termsix) \\
	&=  \termone + \termtwo + \termthree + \termfour.
	\end{align*}	
	Thus we have finished the proof.
\end{proof}

\end{document}